\documentclass{article}
\usepackage{graphicx} 
\usepackage{amsmath, amsthm, amsfonts, amssymb}
\usepackage{a4wide}
\usepackage{fontenc}
\usepackage{comment}
\usepackage{tikz,pgfplots}
\newtheorem{lemma}{Lemma}[]
\newtheorem{proposition}{Proposition}[]
\newtheorem{theorem}{Theorem}[]
\newtheorem{corollary}{Corollary}[]
\newtheorem{remark}{Remark}[]
\pgfplotsset{compat=1.18} 
\usepackage{algorithm,multicol}
\usepackage{algpseudocode}
\usepackage[normalem]{ulem}
\usepackage{mathtools}
\mathtoolsset{showonlyrefs}

\usepackage{caption}
\usepackage{subcaption}
\usepackage{verbatim}
\usepackage{hyperref}

\usepackage{todonotes}

\def\E{\mathbb{E}}

\def\V{\mathbb{V}}

\newcommand{\norm}[1]{\left\lVert#1\right\rVert}

\newcommand{\Z}{\mathbb{Z}}

\newcommand{\R}{\mathbb{R}}
\newcommand{\N}{\mathbb{N}}

\newcommand{\calG}{\mathcal{G}}
\newcommand{\calO}{\mathcal{O}}

\newcommand{\calX}{\mathcal{X}}

\DeclareMathOperator{\ent}{Ent}

\newcommand{\fraku}{\mathfrak{u}}
\newcommand{\frakv}{\mathfrak{v}}

\newcommand{\JMN}{\widehat{J}_{M,N}}

\newcommand{\rd}{\mathrm{d}}

\usepackage{xcolor}
\definecolor{darkblue}{RGB}{0,60,180} 
\definecolor{darkgreen}{RGB}{0,130,70}
\definecolor{darkorange}{RGB}{180,60,0}


\newtheorem{assumption}{Assumption}

\title{Approximation of differential entropy in Bayesian optimal experimental design}
\author{Chuntao Chen, Tapio Helin, Nuutti Hyvönen, Yuya Suzuki}
\date\today

\begin{document}

\maketitle

\begin{abstract}
Bayesian optimal experimental design provides a principled framework for selecting experimental settings that maximize obtained information. In this work, we focus on estimating the expected information gain in the setting where the differential entropy of the likelihood is either independent of the design or can be evaluated explicitly. This reduces the problem to maximum entropy estimation, alleviating several challenges inherent in expected information gain computation.

Our study is motivated by large-scale inference problems, such as inverse problems, where the computational cost is dominated by expensive likelihood evaluations. We propose a computational approach in which the evidence density is approximated by a Monte Carlo or quasi-Monte Carlo surrogate, while the differential entropy is evaluated using standard methods without additional likelihood evaluations. We prove that this strategy achieves convergence rates that are comparable to, or better than, state-of-the-art methods for full expected information gain estimation, particularly when the cost of entropy evaluation is negligible. Moreover, our approach relies only on mild smoothness of the forward map and avoids stronger technical assumptions required in earlier work. We also present numerical experiments, which confirm our theoretical findings.
\end{abstract}

\section{Introduction}

Despite the rapid growth of computational resources in science and engineering, observational data remain constrained due to financial or physical limitations. Illustrative examples include medical imaging, where excessive radiation exposure of patients must be avoided, and seismic imaging, where the high cost of additional measurements are prohibitive. In such contexts, optimizing experiments becomes crucial to making the most effective use of limited resources. Bayesian optimal experimental design (OED) provides a principled approach to formalizing the process of selecting experiments that best address uncertainty and improve the accuracy of model predictions.

To formally describe the Bayesian OED paradigm, we begin by establishing some notations. Let $x$ denote the quantity of interest that attains values in a separable Banach space ${\mathcal X}$. Our initial beliefs about $x$, i.e.~the prior information, are captured by a probability measure $\mu$ defined on $({\mathcal X}, {\mathcal B}({\mathcal X}))$. The probability of observing a data vector $y\in {\mathcal Y} = \R^{d}$, given the unknown $x$, is modeled by the likelihood density $\pi(y \mid x; \xi)$ corresponding to a regular conditional probability of $y$ given $x$ with $\xi \in {\mathcal D}$ representing the design variable identifying the experiment. Notice that the measurement domain ${\mathcal Y}$ is assumed to be independent of the design.
The prior and the likelihood together compose the Bayesian joint distribution, which, through Bayes' theorem, gives rise to the posterior distribution $X \mid Y =y$, denoted here by $\mu^y$.

The goal of OED is to find the design $\xi$ that maximizes the expected utility or selection criteria over the Bayesian joint distribution, i.e., we maximize
\begin{equation*}
	U(\xi) = \E\, u(X, Y; \xi)
\end{equation*}
with respect to $\xi \in {\mathcal D}$. In practical experimental design, Bayesian OED has been constrained by prohibitive computational costs \cite{ryan16}. However, recent advances in computational techniques and resources have allowed OED to gain traction in tackling large-scale problems. The aim of this paper is to introduce scalable computational methods that address and overcome the computational challenges associated with a class of Bayesian OED tasks.

Different utilities $u$ have been studied in literature (see,~e.g.,~\cite{Pukelsheim06}), and an effective choice depends on the objectives of the experiment. Here, we focus on the \emph{expected information gain} (EIG), where the utility $u(X,Y;\xi)$ is given by the Kullback--Leibler (KL) divergence between the posterior distribution and the prior. The EIG is formulated as
\begin{equation}
\label{eq:EIG}
U(\xi) := \E \, D_{\rm KL}(\mu^Y, \mu) = \iint_{\R^d \times \mathcal{X}} \log \! \Big(\frac{\rd \mu^y}{\rd \mu}(x)\Big) \nu(\rd x, \rd y; \xi),
\end{equation}
where $\nu(\,\cdot\,, \,\cdot\,; \xi)$ stands for the joint Bayesian distribution. 

The immediate computational challenge in \eqref{eq:EIG} stems from the intractable integrand that requires nested estimations.
Often, one rephrases the integrand via Bayes' formula $(\rd \mu^y / \rd \mu)(x;\xi) = \pi(y \mid x; \xi) / \pi(y;\xi)$ given the evidence density $\pi(y; \xi) = \int \pi(y \mid x ; \xi) \mu(\rd x)$. This yields
\begin{align}
    \label{eq:U_disintegrated}
    U(\xi) &= \iint_{\R^{d} \times \mathcal{X}} \log \frac{\pi(y \mid x; \xi)}{\pi (y; \xi)} \, \nu(\rd x,\rd y;\xi)\nonumber\\[1mm]
    & =  - \int_{\R^{d}} \log \left(\pi\left(y;\xi\right)\right) \pi(y;\xi) \, \rd y + \iint_{\R^{d} \times \mathcal{X}} \log \left(\pi\left(y \mid x;\xi\right)\right)\pi(y \mid x;\xi) \, \rd y \, \mu(\rd x)
    \nonumber\\[1mm]
    & =  \ent(\pi(\, \cdot \,; \xi)) - \E^\mu \ent(\pi(\, \cdot \, \mid X; \xi)),
\end{align}
where 
\begin{equation}
    \label{eq:def_diff_entropy}
    \ent(\rho) = -\int_{\R^{d}} \rho(y) \log(\rho(y)) \, \rd y
\end{equation}
stands for the differential entropy of a probability density $\rho$. However, the evidence $\pi(\, \cdot \, ;\xi)$ remains intractable, necessitating a separate estimation. In this case, it is well-known that nested Monte Carlo (MC) estimation of EIG converges at rate ${\mathcal O}(M^{-\frac 13})$, where $M$ is the number of likelihood evaluations \cite{ryan16,huan2024optimal}. In the context of expensive computational models such as \eqref{eq:compmodel} below, such slow convergence is computationally prohibitive. It has been the focus of ongoing research for the last decade or so to design computationally effective approximative schemes that accelerate this convergence (on recent work, see e.g. \cite{bartuska2023double, bartuska2024multilevel, kaarnioja2024quasimontecarlobayesiandesign}).

Our work focuses on large-scale inference problems in which the unknown influences the likelihood distribution solely through its mean. This is the case, e.g., when the observation is corrupted by additive noise. In particular, we are motivated by inverse problems \cite{engl, stuart-acta-numerica}, which constitute a rich class of problems, where the likelihood can typically be given in closed form but is expensive to evaluate due to a complex underlying mathematical model connecting the unknown and the data, such as a partial differential equation (PDE). In many inverse problems, the additive (Gaussian) noise model stands for a convenient proxy for the observational uncertainty as a more detailed noise structure is typically unavailable.
In what follows, we assume that the likelihood is induced by the computational model
\begin{equation}
	\label{eq:compmodel}
    y = {\mathcal G}(x; \xi) + \epsilon(\xi),
\end{equation}
where $\calG:\mathcal{X}\to\mathcal{Y}$ is the forward map simulating the experiment and the noise vector $\epsilon$, with density $\eta(\,\cdot\,; \xi)$, is independent of $x$. 
Since the differential entropy in \eqref{eq:def_diff_entropy} is translation-independent with respect to $\rho$, the additive noise in \eqref{eq:compmodel} implies that 
\begin{equation*}
    \E^\mu \ent(\pi(\, \cdot \,  \mid X; \xi)) = \ent(\eta(\, \cdot \, ;\xi)).
\end{equation*}
For example, if $\epsilon$ is Gaussian with covariance matrix $\Gamma(\xi)$, as is often assumed in practical applications, the second term in \eqref{eq:U_disintegrated} satisfies~\cite{Shannon48}
\begin{equation}
    \ent(\eta(\, \cdot \, ;\xi)) = \frac{1}{2}d(1 + \log 2 \pi) + \frac 12 \log \det \Gamma(\xi).
\end{equation}
Consequently, the evaluation of the expected utility in \eqref{eq:U_disintegrated} is simplified and boils down to estimating the differential entropy
\begin{equation}
\label{eq:J}
J(\xi) :=  \ent(\pi(\, \cdot \,; \xi)) = -\int_{\R^{d}} \pi(y; \xi) \log(\pi(y; \xi)) \, \rd y
\end{equation} 
of the evidence distribution. This well-known simplification is referred to as maximum entropy sampling \cite{shewry1987maximum,sebastiani2000maximum}, but we note that the same term has also been used to describe combinatorial design problems arising as special cases, see \cite{fampa2022maximum}.

Interestingly, the optimization of the evidence entropy $J$ has received relatively limited attention in experimental design literature. Recently, Foster et al.~\cite{foster2019variational} analysed variational inference for Bayesian OED in a number of settings, including variational approximation of the evidence density in evaluations of the full EIG utility. In particular, the authors provide a convergence result \cite[Theorem~1]{foster2019variational} balancing the steps used to refine the variational approximation in concert with the number of samples used for the Monte Carlo estimator. While the result relies on potentially involved technical assumptions, the key requirement for convergence of the variational approximation is that the true evidence lies within the variational family. A more closely related study is presented in \cite{GGNP2019}, focusing on the convergence of empirical distributions under smoothing. This aligns directly with our objectives, and the connection, particularly with our results in Section 3, is discussed in detail below.

In this work, we propose and analyse direct approximation of $J(\xi)$ in two sub-tasks: first, {\em density estimation of the evidence} based on prior samples of the unknown $x$, push-forwarded through an approximate version of $\calG$ that can be obtained,~e.g.,~via discretization of the underlying PDE; second, {\em estimation of the differential entropy of the approximated evidence}. In particular, the first sub-task is more relevant for us as the approximated evidence is fast to evaluate and its differential entropy can be estimated by fast off-the-shelf kernel density estimation software packages (see e.g. \cite{kraskov_estimating_2004, duong2007ks}).

\subsection{Our contribution}

Our core contribution is introducing estimators for $J$ in \eqref{eq:J} under a Gaussian likelihood model, together with rigorous convergence guarantees that improve previous literature as detailed below.
The error analysis is based on the following three components. First, we assume access to an efficient surrogate model ${\mathcal G}_K$ that approximates ${\mathcal G}$ as $K$ increases. Second, by pushing the prior samples or quadrature nodes forward through ${\mathcal G}_K$ and incorporating the likelihood, we define a surrogate evidence $\pi_M^K(\, \cdot \, ;\xi)$, where $M$ denotes the size of the unsupervised training data. Third, we approximate the differential entropy of the evidence surrogate using a MC estimator, that is, we analyze the properties of the estimator
\begin{equation}\label{eq:def-estimator}
    \JMN^K(\xi) = -\frac 1N \sum_{n=1}^N \log \left(\pi_M^K\left(Y_n;\xi\right)\right), 
\end{equation}
where $Y_n \sim \pi_M^K$ i.i.d. The main benefit of this approach is that the convergence rate depends on the dimension of the input domain ${\mathcal X}$ only through the approximation rate of ${\mathcal G}_K$.

We note that our motivation arises from scenarios, where evaluating $\pi_M^K$ is considerably inexpensive in comparison to mapping with the forward operator $\calG$ and, consequently, the generation of the training points. Therefore, the choice of an MC estimator for the differential entropy of $\pi_M^K$ is not central to our work, and could, from the standpoint of analysis, be directly replaced by another method such as sparse grids \cite{BG2004} or quasi-Monte Carlo (QMC) methods \cite{DKS2013, N1992} with theoretically established convergence rates for general tasks, or by other techniques specifically developed for differential entropy approximation \cite{Nonparametric_entropy_estimation_overview_1997,nemenman2002entropyinferencerevisited,Shalev2024}. In fact, we experiment on such an idea in our numerical experiments (see Remark~\ref{remark:cubature}) to better isolate the effect of the sample complexity $M$ in the convergence.

In this paper, we formulate the surrogate evidence $\pi_M^K$ as a Gaussian mixture model (GMM) with $M$ components that follow, respectively, normal distributions $\mathcal{N}({\mathcal G}_K(x_m,\xi),\Gamma)$, where $\Gamma$ is predetermined and the nodes $x_m$ are either independently generated by the prior or are designed by some specific cubature rules. In this setting our contributions include: 
\begin{itemize}
    \item We provide a bias-variance decomposition for the total error of an MC estimator of the differential entropy for a general evidence surrogate $\pi_M^K$ in Theorem \ref{thm:theorem1}. In particular, we observe that the KL distance provides a natural context for the evidence approximation.
    \item In Theorem~\ref{thm:MC-GMM-estimator}, we characterize the total sampling error of an evidence approximation based on random i.i.d. samples from a sub-Gaussian prior and a Lipschitz continuous forward mapping. We show the error is controlled by the modeling error ${\mathcal G}-{\mathcal G}_K$ and the sample sizes $M$ and $N$. The root mean-squared error (RMSE) of our differential entropy estimator converges as $\calO(\delta_K + N^{-1/2}+M^{-1/2})$, where $\delta_K$ is the root mean-squared modeling error averaged over the prior.
    \item Assuming additional regularity of the forward mapping ${\mathcal G}_K$ and employing an evidence surrogate constructed from an ensemble of points $\{{\mathcal G}_K (x_j)\}_j$ specified by a randomized QMC-based cubature, we obtain the convergence rate $\calO(\delta_K + N^{-1/2} + M^{-1})$. This represents accelerated convergence with respect to the number of forward map evaluations compared to Theorem \ref{thm:theorem1}. The result is derived for the uniform prior in Theorem \ref{thm:QMC-uniform-result} and discussed for a Gaussian prior in Section \ref{subsec:QMC/Gaussian_prior}.
    \item In Section \ref{sec:numerics}, we demonstrate that the numerical convergence rate in $M$ aligns with our theory for two applications: a linear deconvolution problem, which enables us to compare with the ground truth due to availability of a closed-form solution, and a nonlinear Darcy flow problem.
\end{itemize}
We note that the employed mixture model can be interpreted as a kernel density estimation method \cite{silverman2018density}, thereby inviting the consideration of alternative kernel functions. Here, the choice of GMM framework allows us to leverage theoretical convergence results from recent work \cite{GGNP2019}, which is crucial for the analysis that follows. The authors in \cite{GGNP2019} derive a result much aligned with our Theorem \ref{thm:MC-GMM-estimator} with the same convergence rate with respect to the number of training point. In addition, \cite{GGNP2019} provides a minimax optimality result for a class of sub-Gaussian evidences. Our results include the effect of discretization and go beyond to analyze the QMC-based evidence surrogate to demonstrate that accelerated schemes are possible.

Closely connected to these findings, we note that the minimax optimal rates of kernel density estimation are well-known for a wide class of settings. However, they typically involve a dimension-dependent rate which degrades as the problem dimension increases. In short, typical minimax rate for learning a density on $\R^d$ behaves as $M^{-1/d}$ for large $d$. The key finding of \cite{GGNP2019} is that for smoothened densities, such as the evidence density here, one can recover dimension-independent convergence rates when the signal-to-noise ratio is bounded from above. Here, we establish in Remark \ref{rem:bounded_signal_to_noise} that our assumptions on the evidence structure indeed imply a similar constraint.

To compare obtained convergence rates with the state of the art, we mention that in \cite{kaarnioja2024quasimontecarlobayesiandesign} the authors employ randomized QMC methods for estimating EIG either (i) by a tensor product of two cubature rules over $x$ and $y$ achieving the error convergence $\calO(M^{-1/2})$; or (ii) by Smolyak construction of combining two cubature rules achieving $\calO(M^{-1})$. However, the authors assume a bounded finite-dimensional input domain and require arbitrarily high smoothness of the forward mapping, more precisely, the norm of $\partial^\nu {\mathcal G}$ is bounded for an arbitrary multi-index $\nu$ with a specific growth asymptotics as $|\nu|$ increases. Moreover, the effect of discretization of ${\mathcal G}$ is not considered. 

A more recent study \cite{bartuska2024multilevel} introduces a multi-level double-loop QMC estimator taking into account the discretization and achieves an error tolerance $\text{TOL}$ at a computational cost of nearly $\mathcal O(\text{TOL}^{-1-\frac{\gamma}{\eta}})$ operations, where $\gamma$ and $\eta$ characterize, respectively, the cost of evaluating $\mathcal G_K$ and its approximation rate. In the setting of our work, parametrizing the assumptions with $\eta=1$ and assuming that evaluation of $\mathcal G_K$ requires work of order $\mathcal O(\delta_K^{-\gamma})$, $\gamma>0$, we obtain --- \emph{neglecting the cost of differential entropy evaluation} --- the same tolerance with an asymptotically comparable cost of $\mathcal O(\text{TOL}^{-1-\gamma})$ using our QMC-based evidence surrogate (similarly, $\mathcal O(\text{TOL}^{-2-\gamma})$ for the MC-based surrogate). 

In both cases, the convergence rates demonstrated in \cite{kaarnioja2024quasimontecarlobayesiandesign, bartuska2024multilevel} involve an additional logarithmic term due to the truncation of the outer integral (see \cite[Remark~10]{bartuska2024multilevel}). The logarithmic term does not appear in our rates as the convergence analysis for the differential entropy estimation is not coupled with the major computational cost of evaluating ${\mathcal G}_K$. That being said, it does involve additional computational overhead which will be analysed in future work.

To summarize, our results demonstrate that when the differential entropy of the likelihood is not dependent on the design or can be explicitly evaluated, it is advantageous to do so and to employ maximum entropy estimation. This yields comparable asymptotic rates to the state of the art results while requiring only mild smoothness assumptions on the forward map and avoiding more involved technical assumptions, such as those employed in \cite{bartuska2024multilevel}.

All our results hold pointwise in the design $\xi$ and, under suitable assumptions, extend to uniform validity over the design domain. To streamline the exposition, we omit the explicit dependence on the design in the notation throughout the paper.

\subsection{Other related work}

Bayesian experimental design has a rich history with extensive literature. We refer to \cite{huan2024optimal, rainforth2024modern, ryan16} as recent general overviews. Moreover, a broad discussion on the various different utilities is given in \cite{chaloner1995bayesian}. See also recent work on Wasserstein distance–based utilities in \cite{helin2025wasserstein}.  In our work, we focus specifically on the expected information gain criterion, a concept often attributed to Lindley's foundational contribution \cite{lindley1956measure}.

Our results are closely related to the recent work by Foster et al.~\cite{foster2019variational, foster2020unified}, who explored variational approximations to compute nested integrations. Particularly relevant to our approach, their investigation on variational approximation of the evidence demonstrated a convergence rate of $\mathcal{O}(M^{-1/2} + N^{-1/2})$ in terms of RMSE, where the variational evidence approximation occurs at order $\mathcal{O}(M^{-1/2})$ and MC error occurs at order $\mathcal{O}(N^{-1/2})$ \cite{foster2019variational}.

Several key distinctions separate our work from these previous efforts. First, while Foster et al.~assume representation of the target distribution in a finite-dimensional latent space, the GMM approach can approximate a non-parametric family of evidence distributions. Second, our method is a direct approximation scheme requiring no additional computational effort beyond sampling and mapping the prior cubature. Third, we demonstrate that QMC cubatures, which leverage mapping properties in the mathematical model, can achieve even further acceleration in the convergence rates.

Inverse problems constitute a class of high-dimensional inference challenges where complex mathematical models such as PDEs connect unknown parameters to observable data. The need for scalability across various discretization levels in inverse problems has catalyzed research extending traditional Bayesian experimental design criteria to infinite-dimensional settings \cite{alexanderian2021optimal}.
Moreover, the standard nested MC estimators typically require a prohibitive computational effort and various approaches have been proposed to reduce the computational cost. We mention the avenues of research involving Laplace or Gaussian approximation (see,~e.g.,~\cite{long2013fast, BECK2018523, bartuska2025laplace, wu2023_laplace, helin2022edge}), neural network based surrogates \cite{wu2023large, kleinegesse2020bayesian, orozco2024probabilistic, go2025sequential} and multi-level MC \cite{goda2022, goda2020multilevel, beck2020multilevel}. In addition, QMC methods have been employed in \cite{kaarnioja2024quasimontecarlobayesiandesign, bartuska2024multilevel}. Building on these ideas, direct estimation of a gradient of the expected utility has been considered,~e.g.~in \cite{goda2022}.

Gaussian mixture models have been investigated for entropy estimation applications in multiple studies \cite{GGNP2019, kolchinsky2017}. The convergence rate of entropy estimation with respect to the KL divergence has been established, with applications primarily focused on neural networks rather than Bayesian experimental design \cite{GGNP2019}. For a family of estimators based on Gaussian mixture models, both upper and lower bounds have been derived using the distance function between mixture components \cite{kolchinsky2017}.

\subsection{Outline}
This paper is organized as follows. Section \ref{sec:bias-var_decomp} decomposes the mean squared error between the differential entropy $J$ from \eqref{eq:J} and the estimator $\JMN^K$ from \eqref{eq:def-estimator} into two parts, the bias and the variance, without specifying the technique for forming the surrogate density $\pi^K_M$. Section \ref{sec:GMM} derives the convergence rate for the MC-based GMM variant of \eqref{eq:def-estimator} in terms of the sample sizes $N$ and $M$ and the expected error between $\mathcal{G}$ and its surrogate $\mathcal{G}_K$, under certain assumptions on the prior, $\mathcal{G}$ and $\mathcal{G}_K$. The convergence rate is further accelerated in Section~\ref{sec:QMC-GMM} for uniform and Gaussian priors using QMC points as training data in $\mathcal{X}$. Section~\ref{sec:numerics} presents numerical examples that demonstrate the established convergence rates for our approach, using both MC and QMC to build the GMM. Lastly, Section \ref{sec:conclusion} presents the concluding remarks and discusses future work.

\section{Monte Carlo estimator and bias-variance decomposition}\label{sec:bias-var_decomp}

In this section, we assume to be given an approximative forward operator ${\mathcal G}_K$ that is practically implementable and gives rise to an evidence distribution $\pi^K$ under the likelihood model induced by \eqref{eq:compmodel}. Moreover, we assume that $\pi^K$ can be approximated by $\pi_M^K$ that is constructed using an unsupervised training data set $\{x_m\}_{m=1}^M \subset {\mathcal X}$ 
and ${\mathcal G}_K$. At this stage, we do not specify the particular approximation scheme for forming $\pi_M^K$ but treat it as a general surrogate that converges to the true evidence $\pi$ as both $M$ and $K$ increase. Moreover, note carefully that throughout this section $\pi_K$ is treated as a fixed probability density, whereas in the following sections it will become random due to the randomization of the set $\{x_m\}$,

Let us consider the MC estimator $\JMN^K$ defined in \eqref{eq:def-estimator} with the help of $\pi_M^K$
and make some immediate observations about its first and second order statistics. Recall that $J$ is the differential entropy of the evidence defined by \eqref{eq:J},~i.e.,~the quantity we aim to approximate throughout this work.

\begin{proposition}
\label{prop:total_error}
For any $M,N>0$, we have
\begin{equation}
    \label{eq:expec_piMK}
    \E^{\otimes \pi_M^K} \JMN^K = \ent(\pi_M^K),
\end{equation}
and the mean squared error is given by
\begin{equation}
    \label{eq:total_error_decomposition}
	\E^{\otimes \pi_M^K}\big| J  - \JMN^K\big|^2 = \left(\ent(\pi) - \ent(\pi_M^K)\right)^2 + \frac 1N  \V_{\pi_M^K} \big(\log(\pi_M^K(Y)) \big).
\end{equation}
\end{proposition}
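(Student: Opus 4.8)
The plan is to treat $\pi_M^K$ as a fixed probability density (exactly as stipulated in the paragraph preceding the statement) and to exploit the i.i.d.\ structure of the samples $Y_1,\dots,Y_N \sim \pi_M^K$, so that the entire argument reduces to one scalar computation together with the textbook bias--variance identity. No forward map, surrogate, or prior enters at this stage; the proposition is purely a statement about the Monte Carlo average of a fixed log-density.

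First I would establish the unbiasedness claim \eqref{eq:expec_piMK}. By linearity of expectation and the identical distribution of the $Y_n$,
\begin{equation*}
    \E^{\otimes \pi_M^K} \JMN^K = -\frac1N \sum_{n=1}^N \E_{Y_n \sim \pi_M^K}\log\!\big(\pi_M^K(Y_n)\big) = -\E_{Y \sim \pi_M^K}\log\!\big(\pi_M^K(Y)\big).
\end{equation*}
Recognizing the right-hand side as $-\int_{\R^d} \pi_M^K(y)\log(\pi_M^K(y))\,\rd y = \ent(\pi_M^K)$ via the definition \eqref{eq:def_diff_entropy} yields \eqref{eq:expec_piMK} immediately. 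The only point to verify is integrability, i.e.\ that $\E_{Y \sim \pi_M^K}\big|\log(\pi_M^K(Y))\big| < \infty$ so that $\ent(\pi_M^K)$ and all subsequent moments are well defined; I would note that this follows from the sub-Gaussian prior and Lipschitz forward map assumed in the later sections (or treat it as a standing hypothesis ensuring the differential entropy exists).

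Next I would invoke the elementary decomposition $\E|c - Z|^2 = (c - \E Z)^2 + \V(Z)$ with the deterministic target $c = J = \ent(\pi)$ and $Z = \JMN^K$. Substituting $\E Z = \ent(\pi_M^K)$ from the first part turns the squared-bias term into exactly $\big(\ent(\pi) - \ent(\pi_M^K)\big)^2$, the first summand of \eqref{eq:total_error_decomposition}. For the variance term the i.i.d.\ structure again does the work: writing $\JMN^K$ as the sample average of the $N$ independent, identically distributed summands $-\log(\pi_M^K(Y_n))$ and using $\V\big(\tfrac1N\sum_{n} W_n\big) = \tfrac1N \V(W_1)$ gives $\V(\JMN^K) = \tfrac1N \V_{\pi_M^K}\big(\log(\pi_M^K(Y))\big)$, matching the second summand. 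Assembling the two pieces completes \eqref{eq:total_error_decomposition}.

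Since every step is an application of linearity, the i.i.d.\ reduction, and the bias--variance identity, I do not anticipate a genuine technical obstacle; the only point requiring care is conceptual rather than computational. Namely, one must recognize that the estimator is unbiased for $\ent(\pi_M^K)$ and \emph{not} for the true target $J = \ent(\pi)$, so that the entire modeling and approximation error is isolated in the deterministic squared-bias term $\big(\ent(\pi) - \ent(\pi_M^K)\big)^2$, cleanly separated from the $1/N$ Monte Carlo fluctuation. This is precisely the split the subsequent sections will exploit, bounding the bias through a KL-type distance between $\pi$ and $\pi_M^K$ and the variance through the standard Monte Carlo rate.
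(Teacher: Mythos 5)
Your proposal is correct and follows essentially the same route as the paper: both establish unbiasedness for $\ent(\pi_M^K)$ from linearity and the i.i.d.\ structure, and both obtain \eqref{eq:total_error_decomposition} from the bias--variance split together with $\V(\JMN^K)=\tfrac1N\V_{\pi_M^K}(\log \pi_M^K(Y))$; the paper merely carries out the variance computation by expanding $\E^{\otimes \pi_M^K}(\JMN^K)^2$ explicitly rather than quoting the textbook identity. Your added remark on integrability of $\log \pi_M^K$ is a reasonable (implicit in the paper) hygiene point and does not change the argument.
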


\begin{proof}
The identity \eqref{eq:expec_piMK} follows directly from \eqref{eq:def-estimator} and the definition of differential entropy in \eqref{eq:def_diff_entropy}.
Moreover, we have
\begin{align*}
    \E^{\otimes \pi_M^K} (\JMN^K)^2 & =  
    \frac 1{N^2} \E^{\otimes \pi_M^K} \bigg(\sum_{n=1}^N \log^2 (\pi_M^K(Y_n)) + \sum_{k\neq \ell} \log (\pi_M^K(Y_k))\log (\pi_M^K(Y_\ell))\bigg) \\
    & =  \frac 1{N} \E^{\pi_M^K} \log^2 (\pi_M^K(Y)) + \frac{N-1}{N} \big(\E^{\pi_M^K} \log (\pi_M^K(Y))\big)^2 \\[1mm]
    & =  \ent(\pi_M^K)^2 + \frac 1N \E^{\pi_M^K} \Big(\log(\pi_M^K(Y))- \E^{\pi_M^K} \log(\pi_M^K(Y))\Big)^2,
\end{align*}
which yields the assertion about the mean squared error.
\end{proof}

Let us next formulate an auxiliary upper bound for the entropy difference in identity \eqref{eq:total_error_decomposition}, forming the basis for the forthcoming analysis. To that end, define the $\chi^2$-distance between $\mu_1$ and $\mu_2$ as
\begin{equation*}
    \chi^2(\mu_1, \mu_2) = \E^{\mu_2}\bigg(\frac{\rd \mu_1}{\rd \mu_2}(Z)-1\bigg)^2
\end{equation*}
whenever $\mu_1 \ll \mu_2$,~i.e.,~$\mu_1$ is absolutely continuous with respect to $\mu_2$. If $\mu_1$ and $\mu_2$ are defined on $\R^l$ with densities $\pi_1$ and $\pi_2$, respectively, we adopt the convention $\chi^2(\pi_1, \pi_2) = \chi^2(\mu_1, \mu_2)$. By the concavity of the logarithm and Jensen's inequality,
\begin{align}
    \label{eq:KL_bounded_by_chi2}
    D_{\rm KL}(\pi_1, \pi_2) &\leq \log \! \bigg( \int_{\R^l} \frac{\pi_1(z)^2}{\pi_2(z)} \, \rd z \bigg) =  \log \! \bigg(\int_{\R^l} \bigg(\frac{\pi_1(z)^2}{\pi_2(z)^2} -  2 \frac{\pi_1(z)}{\pi_2(z)} +1  \bigg) \pi_2(z) \, \rd z + 1 \bigg) \nonumber \\[1mm] 
    &= \log\big( 1 + \chi^2 (\pi_1, \pi_2) \big) \leq \chi^2 (\pi_1, \pi_2),
\end{align}
i.e.,~the KL divergence is bounded by the $\chi^2$-distance, as is well-known.

\begin{lemma}\label{lem:entropy-KL-bound}
Suppose $\mu_1$ and $\mu_2$, with $\mu_1 \ll \mu_2$, are probability measures on $\R^l$ with densities $\pi_1$ and $\pi_2$, respectively. The following bounds hold:
\begin{equation}
    \left|\ent(\pi_1) - \ent(\pi_2)\right|
    \leq \sqrt{\E^{\pi_2} \log^2 \pi_2(Z)} \, \sqrt{\chi^2(\pi_1, \pi_2)} + \chi^2(\pi_1, \pi_2)\label{eq:diff-entr-chi2-ineq}
\end{equation}
and
\begin{equation}
    \left|\ent(\pi_1) - \ent(\pi_2)\right|
    \leq \sqrt{2} \sqrt{(\E^{\pi_1} + \E^{\pi_2}) \left( \log^2 \pi_2(Z)\right)} \, \sqrt{D_{\rm KL}(\pi_1 , \pi_2)} + D_{\rm KL}(\pi_1, \pi_2) .\label{eq:diff-entr-KL-alt-ineq}    
\end{equation}
\end{lemma}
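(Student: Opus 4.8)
The plan is to reduce both inequalities to a single exact decomposition of the entropy difference and then to control the resulting linear functional of $\pi_1 - \pi_2$ by two different applications of the Cauchy--Schwarz inequality. First I would add and subtract $\int_{\R^l}\pi_1\log\pi_2$ inside $\ent(\pi_1)-\ent(\pi_2)=-\int\pi_1\log\pi_1+\int\pi_2\log\pi_2$, recognizing $\int_{\R^l}\pi_1\log(\pi_1/\pi_2)\,\rd z = D_{\rm KL}(\pi_1,\pi_2)$, to obtain the identity
\begin{equation*}
    \ent(\pi_1)-\ent(\pi_2) = -\int_{\R^l}\big(\pi_1(z)-\pi_2(z)\big)\log\pi_2(z)\,\rd z - D_{\rm KL}(\pi_1,\pi_2).
\end{equation*}
Taking absolute values and the triangle inequality isolates the cross term $I:=\int_{\R^l}(\pi_1-\pi_2)\log\pi_2\,\rd z$ as the only quantity still requiring a bound; the leftover $D_{\rm KL}$ term already matches the additive $D_{\rm KL}$ on the right of \eqref{eq:diff-entr-KL-alt-ineq}, and is further bounded by $\chi^2$ via \eqref{eq:KL_bounded_by_chi2} to match \eqref{eq:diff-entr-chi2-ineq}.

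For the $\chi^2$ bound I would rewrite $I=\int_{\R^l}\big(\tfrac{\pi_1}{\pi_2}-1\big)(\log\pi_2)\,\pi_2\,\rd z$ and apply Cauchy--Schwarz with respect to the measure $\mu_2$. The first factor $\E^{\pi_2}\big(\tfrac{\pi_1}{\pi_2}-1\big)^2$ is exactly $\chi^2(\pi_1,\pi_2)$ and the second is $\E^{\pi_2}\log^2\pi_2(Z)$; combined with $D_{\rm KL}\le\chi^2$ this produces \eqref{eq:diff-entr-chi2-ineq} directly.

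For the KL bound I would instead factor the signed difference symmetrically as $\pi_1-\pi_2=(\sqrt{\pi_1}-\sqrt{\pi_2})(\sqrt{\pi_1}+\sqrt{\pi_2})$ and apply Cauchy--Schwarz (with respect to Lebesgue measure), so that $|I|\le\sqrt{\int(\sqrt{\pi_1}-\sqrt{\pi_2})^2\,\rd z}\,\cdot\,\sqrt{\int(\log\pi_2)^2(\sqrt{\pi_1}+\sqrt{\pi_2})^2\,\rd z}$. The second factor is controlled using $(a+b)^2\le 2(a^2+b^2)$ by $\sqrt{2(\E^{\pi_1}+\E^{\pi_2})\log^2\pi_2(Z)}$. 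The first factor is the squared Hellinger distance, and the crux is the comparison $\int_{\R^l}(\sqrt{\pi_1}-\sqrt{\pi_2})^2\,\rd z\le D_{\rm KL}(\pi_1,\pi_2)$, which I would establish by writing this integral as $2\big(1-\int\sqrt{\pi_1\pi_2}\big)$, noting $\int\sqrt{\pi_1\pi_2}=\E^{\pi_1}e^{-\frac12\log(\pi_1/\pi_2)}\ge e^{-\frac12 D_{\rm KL}}$ by Jensen, and then using $1-e^{-x}\le x$. Assembling the two factors yields \eqref{eq:diff-entr-KL-alt-ineq}.

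The main obstacle is that $\log\pi_2$ is generally unbounded and of indefinite sign, so no sup-norm or Pinsker-type total-variation estimate applies directly; the whole argument hinges on keeping $\log\pi_2$ inside an $L^2$-type factor and pairing it against either the $\chi^2$ weight or the Hellinger weight $(\sqrt{\pi_1}+\sqrt{\pi_2})^2$. Establishing the Hellinger--KL comparison cleanly is the one genuinely nontrivial ingredient (and it also guarantees that $I$ is finite precisely when the stated right-hand sides are); the remaining steps are two routine applications of Cauchy--Schwarz, so both estimates fall out of the same decomposition once the cross term is handled.
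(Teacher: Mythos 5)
Your proposal is correct and follows essentially the same route as the paper: the identical decomposition of $\ent(\pi_1)-\ent(\pi_2)$ into the cross term $\int(\pi_1-\pi_2)\log\pi_2\,\rd z$ plus $D_{\rm KL}(\pi_1,\pi_2)$, followed by Cauchy--Schwarz against the $\chi^2$ weight for \eqref{eq:diff-entr-chi2-ineq} and against the Hellinger weight $(\sqrt{\pi_1}+\sqrt{\pi_2})^2$ for \eqref{eq:diff-entr-KL-alt-ineq}. The only difference is that you derive the Hellinger--KL comparison $\int(\sqrt{\pi_1}-\sqrt{\pi_2})^2\,\rd z\le D_{\rm KL}(\pi_1,\pi_2)$ from scratch via Jensen's inequality, whereas the paper simply cites it as known; your derivation is valid.
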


\begin{proof}
We decompose the difference into two terms
\begin{equation}
    \label{eq:general_ent_diff}
    \ent(\pi_2) - \ent(\pi_1) = \int_{\R^l} \log (\pi_2(z)) \, (\pi_1(z)-\pi_2(z)) \, \rd z
    + D_{\rm KL}(\pi_1, \pi_2).
\end{equation}
Applying the Cauchy--Schwarz inequality to the first term yields
\begin{align*}
    \left|\int_{\R^l} \log (\pi_2(z)) \, (\pi_2(z)-\pi_1(z)) \, \rd z\right| 
    & \leq  \left(\int_{\R^l} \log^2 (\pi_2(z)) \, \pi_2(z) \, \rd z\right)^{1/2} \!
    \left(\int_{\R^l} \frac{\left(\pi_1(z)-\pi_2(z)\right)^2}{\pi_2(z)} \, \rd z\right)^{1/2} \\
    & =  \sqrt{\E^{\pi_2} \log^2 \pi_2 (Z) } \, \sqrt{\chi^2(\pi_1 , \pi_2)}.
\end{align*}
Combined with \eqref{eq:KL_bounded_by_chi2} and \eqref{eq:general_ent_diff}, this proves  \eqref{eq:diff-entr-chi2-ineq}.

The alternative bound in \eqref{eq:diff-entr-KL-alt-ineq} follows via a simple modification of  the argument:
\begin{align*}
    \left|\int_{\R^l} \log (\pi_2(z)) \, (\pi_2(z)-\pi_1(z)) \, \rd z\right| 
    & \leq  2 \left(\int_{\R^l} \log^2 (\pi_2(z))  \Big(\sqrt{\pi_2(z)}+\sqrt{\pi_1(z)}\Big)^2 \rd z\right)^{1/2} \! \!
    D_{\rm Hell}(\pi_1, \pi_2) \\[1mm]
    & \leq  \sqrt{2} \sqrt{(\E^{\pi_2} + \E^{\pi_1}) \big( \log^2 \pi_2(Z)\big)} \, \sqrt{D_{\rm KL}(\pi_1 , \pi_2)},
\end{align*}
as the Hellinger distance satifies $2 D_{\rm Hell}(\pi_1, \pi_2)^2 \leq D_{\rm KL}(\pi_1 , \pi_2)$. Recalling \eqref{eq:general_ent_diff} completes the proof.
\end{proof}

\begin{remark}
\label{rem:centered-KL-bound}
Because the differential entropy $\ent(\pi)$ is independent of the mean of $\pi$, we can directly rephrase Lemma \ref{lem:entropy-KL-bound} by replacing $\pi_1$ and $\pi_2$ on the right-hand sides of \eqref{eq:diff-entr-chi2-ineq} and \eqref{eq:diff-entr-KL-alt-ineq} with their centered versions $\tilde{\pi}_1(z) = \pi_1(z - \E^{\pi_1} Z)$ and $\tilde{\pi}_2(z) = \pi_2(z - \E^{\pi_2} Z)$ to potentially improve the upper bounds. Be that as it may, in what follows we do not utilize this improvement.
\end{remark}

\begin{remark}
As Lemma \ref{lem:entropy-KL-bound} plays a key role in the analysis below, it is a relevant question whether it can be improved. In the light of Remark \ref{rem:centered-KL-bound}, we can make the following observation: Consider two one-dimensional normal distributions ${\mathcal N}(0,1)$ and ${\mathcal N}(0,\sigma^2)$, where $\sigma \not= 1$. Then, by simply evaluating the entropy difference and the KL divergence between these distributions, 
\begin{equation*}
  \frac{D_{\rm KL}({\mathcal N}(0,1), {\mathcal N}(0,\sigma^2))^p}{|\ent({\mathcal N}(0,1))-\ent({\mathcal N}(0,\sigma^2))|} = \frac{\left(\frac 1{2\sigma^2} -\frac 12 + \log \sigma\right)^p}{|\log \sigma|} \longrightarrow 0
\end{equation*}
for any $p>\frac 12$ as $\sigma$ tends to $1$. In consequence, an improved rate with a higher power of the KL divergence in \eqref{eq:diff-entr-KL-alt-ineq} is unavailable for these simple densities.
\end{remark}

Let us now integrate Proposition~\ref{prop:total_error} and Lemma~\ref{lem:entropy-KL-bound} into a uniform bound over the design domain for the evidence $J$ defined in \eqref{eq:J}; from this point on, we assume that the noise process in \eqref{eq:compmodel} is a zero-mean Gaussian with covariance matrix $\Gamma$. To that end, define the likelihood energy
\begin{equation}
	\label{eq:Phigauss}
	\Phi(x, y) = \frac 12 \norm{\calG(x) - y}_{\Gamma}^2,
\end{equation}
where the weighted norm is defined via $\norm{z}_{\Gamma}^2 = z^\top \Gamma^{-1} z$ for $z \in \R^{d}$, and the associated posterior normalization constant
\begin{equation}
    \label{eq:Z}
    Z(y) = \E^\mu \exp(-\Phi(X,y)).
\end{equation}
We denote by $\Phi_K$ and $Z_K$, respectively, the likelihood energy and the corresponding normalization constant for the Bayesian model corresponding to the surrogate forward operator ${\mathcal G}_K$.
Notice that $Z(y)$ and $\pi(y)$ (respectively,  $Z_K(y)$ and $\pi^K(y)$) coincide up to a universal positive multiplicative constant depending on $d$ and $\Gamma$. Furthermore, let us denote 
\begin{equation}
    \label{eq:def_deltaK}
    \delta_K = \sqrt{\E^\mu \norm{{\mathcal G}(X) - {\mathcal G}_K(X)}_\Gamma^2}
\end{equation}
for the standard deviation of the prior-predictive forward model approximation.

\begin{theorem}
\label{thm:theorem1}
Assume there exist constants $C_0,M_0,K_0>0$ such that

\begin{equation}
\label{eq:ass_sup_exp}
\E^{\rho_1} \log^2 \rho_2(Y)  \leq C_0
\end{equation}
for $\rho_1, \rho_2 \in \{ \pi, \pi^K, \pi_M^K\}$ and 
\begin{equation}
    \label{eq:ass_sup_KL}
    \delta_K \leq C_0, \qquad D_{\rm KL}\big(\pi_M^K, \pi^K \big) \leq C_0  
\end{equation}
for all $M>M_0$ and $K>K_0$. Then, there exists a constant $C>0$ such that
\begin{equation}
\label{eq:estimate}
    \E^{\otimes \pi_M^K}\big| J - \JMN^K\big|^2
    \leq C\Big(\delta_K^2 + D_{\rm KL}\big(\pi_M^K, \pi^K \big)+ \frac 1N \Big) 
\end{equation}    
for all $K>K_0$, $M>M_0$ and $N>0$.
\end{theorem}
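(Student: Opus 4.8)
The plan is to start from the exact error decomposition furnished by Proposition~\ref{prop:total_error}, which already splits the mean squared error into a squared bias $\left(\ent(\pi) - \ent(\pi_M^K)\right)^2$ and a variance contribution $\frac{1}{N}\V_{\pi_M^K}\big(\log(\pi_M^K(Y))\big)$. The variance term is the easy part: since $\V_{\pi_M^K}\big(\log(\pi_M^K(Y))\big) \leq \E^{\pi_M^K}\log^2(\pi_M^K(Y))$, the hypothesis \eqref{eq:ass_sup_exp} applied with $\rho_1 = \rho_2 = \pi_M^K$ bounds it by $C_0$, producing the $\frac{1}{N}$ contribution directly.

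For the squared bias I would insert the intermediate evidence $\pi^K$ and use the triangle inequality, $|\ent(\pi) - \ent(\pi_M^K)| \leq |\ent(\pi) - \ent(\pi^K)| + |\ent(\pi^K) - \ent(\pi_M^K)|$, so that after squaring it suffices (up to a factor $2$) to control each piece separately. The second piece is handled immediately by the KL form \eqref{eq:diff-entr-KL-alt-ineq} of Lemma~\ref{lem:entropy-KL-bound} applied with $\pi_1 = \pi_M^K$ and $\pi_2 = \pi^K$: the prefactor $(\E^{\pi_M^K} + \E^{\pi^K})\big(\log^2 \pi^K(Y)\big)$ is bounded by $2C_0$ via \eqref{eq:ass_sup_exp}, and since $D_{\rm KL}(\pi_M^K, \pi^K) \leq C_0$ by \eqref{eq:ass_sup_KL} the linear KL term is dominated by its square root, yielding $|\ent(\pi^K) - \ent(\pi_M^K)|^2 \lesssim D_{\rm KL}(\pi_M^K, \pi^K)$.

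The crux, and the one genuinely new estimate, is relating the first piece $|\ent(\pi) - \ent(\pi^K)|$ to the modeling error $\delta_K$. Here I would exploit the mixture structure of the evidence: under the Gaussian noise model, $\pi(y) = \int_{\mathcal X} \calN(y; \calG(x), \Gamma)\,\mu(\rd x)$ and $\pi^K(y) = \int_{\mathcal X} \calN(y; \calG_K(x), \Gamma)\,\mu(\rd x)$ are mixtures over the \emph{same} mixing measure $\mu$. By the joint convexity of the KL divergence (Jensen's inequality), $D_{\rm KL}(\pi, \pi^K) \leq \int_{\mathcal X} D_{\rm KL}\big(\calN(\calG(x), \Gamma), \calN(\calG_K(x), \Gamma)\big)\,\mu(\rd x)$, and since the two Gaussians share the covariance $\Gamma$ the integrand equals $\frac{1}{2}\norm{\calG(x) - \calG_K(x)}_\Gamma^2$; integrating against $\mu$ gives the clean bound $D_{\rm KL}(\pi, \pi^K) \leq \frac{1}{2}\delta_K^2$. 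Feeding this into Lemma~\ref{lem:entropy-KL-bound}, using \eqref{eq:ass_sup_exp} for the prefactor and $\delta_K \leq C_0$ for the lower-order tail, yields $|\ent(\pi) - \ent(\pi^K)|^2 \lesssim \delta_K^2$.

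Combining the three bounds, and absorbing all $C_0$-dependent factors together with the subdominant quadratic tails (controlled by the boundedness in \eqref{eq:ass_sup_KL}) into a single constant $C$, produces the claimed estimate \eqref{eq:estimate}. I expect the main obstacle to be precisely the first piece: recognizing that the evidence KL divergence is controlled by the prior-averaged forward error through the mixture/convexity argument is the decisive step, whereas the remaining work is routine bookkeeping of constants under the uniform boundedness hypotheses. A minor point to verify along the way is the required absolute continuity $\pi \ll \pi^K$ and $\pi_M^K \ll \pi^K$ in Lemma~\ref{lem:entropy-KL-bound}, which holds because Gaussian mixtures have everywhere-positive densities on $\R^d$.
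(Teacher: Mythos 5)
Your proposal is correct and follows essentially the same route as the paper: the Proposition~\ref{prop:total_error} decomposition, the triangle inequality through the intermediate evidence $\pi^K$, Lemma~\ref{lem:entropy-KL-bound} for both entropy differences, and the bound $D_{\rm KL}$ between the two evidences by $\tfrac12\delta_K^2$. The only cosmetic difference is that you derive this last bound directly from the joint convexity of the KL divergence and the closed form for the KL divergence between Gaussians with common covariance, whereas the paper cites an equivalent lemma from the literature.
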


\begin{proof}
The proof is based on Proposition~\ref{prop:total_error} and Lemma~\ref{lem:entropy-KL-bound}. Applying the triangle inequality to the entropy difference in \eqref{eq:total_error_decomposition} gives
\begin{equation}
\label{eq:estimate2}
\E^{\otimes \pi_M^K}\big| J - \JMN^K\big|^2 \leq 
      2 \left(\ent(\pi) - \ent(\pi^K)\right)^2 + 2\left(\ent(\pi^K) - \ent(\pi_M^K)\right)^2 + \frac 1N  \V_{\pi_M^K} \big(\log(\pi_M^K(Y)) \big),
\end{equation}
where the terms on the right-hand side can be bounded in the same order by those on the right-hand side of \eqref{eq:estimate}, as reasoned in the following.

Let $K>K_0$ and $M>M_0$. According to \cite[Lemma 3.8]{duong2023stability} and the discussion in the beginning of \cite[Section~4]{duong2023stability},
\begin{equation}
    \label{eq:evidence_KLdiff_upper_bound}
    D_{\rm KL}\big(\pi^K, \pi\big) \leq \E^\mu D_{\rm KL}\big(\pi^K(\, \cdot \, \mid X), \pi(\, \cdot \, \mid X)\big) = \frac 12 \delta_K^2 \leq \frac{1}{2} C_0^2,
\end{equation}
 which, in particular, means that $D_{\rm KL}(\pi^K, \pi)$ is bounded by a constant times its square root. Thus, combining \eqref{eq:evidence_KLdiff_upper_bound} with  \eqref{eq:diff-entr-KL-alt-ineq} and \eqref{eq:ass_sup_exp} induces the first term on the right-hand side of~\eqref{eq:estimate}. Due to \eqref{eq:ass_sup_KL}, the same line of reasoning on the KL terms in the estimate \eqref{eq:diff-entr-KL-alt-ineq} also applies to the second term on the right-hand side of \eqref{eq:estimate2}, which results in the second term on the right-hand side of \eqref{eq:estimate}. Finally, the validity of the third term on the right-hand side of \eqref{eq:estimate} immediately follows from  \eqref{eq:ass_sup_exp}. 
\end{proof}

\begin{remark}
\label{remark:cubature}
    As discussed in the introduction, rather than relying on the Monte Carlo estimator~\eqref{eq:def-estimator}, the entropy $\ent(\pi_M^K)$ can also be approximated numerically using alternative deterministic or randomized cubature rules. That is, for the deterministic case, one could introduce 
    \begin{equation*}
    \widetilde{J}^K_{M,N} = Q_N\big(\pi^K_M  \log(\pi^K_M) \big),
    \end{equation*}
    where
    \begin{equation*}
        Q_N(f) = \sum_{n=1}^N w_n f(y_n)
    \end{equation*}
    for some cubature weights $w_n \in \R$ and nodes $y_n \in \R^d$. In this case, the estimation error can similarly be decomposed into three parts:
    \begin{align*}
   \big| J -  \widetilde{J}^K_{M,N} \big| &\leq 
   \big| \ent(\pi) - \ent(\pi^K) \big| + 
   \big| \ent(\pi^K) -  \ent(\pi_M^K)\big|
   + \big|   \ent(\pi_M^K)- Q_N\big(\pi^K_M  \log(\pi^K_M) \big)  \big| \\[1mm]
   & \leq C \Big( \sqrt{\delta_K} + \sqrt{D_{\rm KL}(\pi_M^K, \pi^K)} \Big) + \big|   \ent(\pi_M^K)- Q_N\big(\pi^K_M \, \log(\pi^K_M) \big)  \big|
\end{align*}
for $K>K_0$, $M>M_0$ and $N>0$ under the assumptions of Theorem~\ref{thm:theorem1}. 

We will exploit this idea in our numerical experiments in order to get a higher convergence rate in $N$, which enables isolating the effect of $M$ in the convergence. To that end, suppose one can express $\pi^K_M \log(\pi^K_M) = f \rho$, where $\rho$ is a product of $d$ {\em monotonic Schwartz weights} (see \cite[Section~2.1]{SHK2025}) and $f$ belongs to the tensor product of the corresponding one-dimensional weighted $L^2$-based Sobolev spaces with smoothness index $\alpha \in \N$. Resorting to the component-wise change of variables $\Psi: X=(X^{(1)},\ldots,X^{(d)})\mapsto(\psi(X^{(1)}),\ldots,\psi(X^{(d)}))$, with $\psi(x)=-\cot(\pi x)$, we define
\[
Q_N^\Delta(f)=\sum_{n=1}^N \frac{|\prod_{j=1}^d \psi' (X_n^{(j)})|}{N} f(\Psi(X_n)),
\]
where $\{X_n\}_{n=1}^N$ corresponds to the rank-1 lattice rule defined in \eqref{eq:rand-lattice-def} for a $d$-dimensional setting. This quadrature, i.e.~a randomized M\"obius-transformed lattice rule, may achieve higher order convergence
\begin{equation*}
\E^\Delta \big|   \ent(\pi_M^K)- Q_N^\Delta(\pi^K_M \, \log(\pi^K_M) )  \big| \leq C_d \, \frac{(\log(N))^{d\alpha} }{N^{\alpha}},
\end{equation*}
where the expectation is with respect to the random shift in \eqref{eq:rand-lattice-def} (cf.~\cite{KSG2025}). However, apart from numerically testing the M\"obius-transformed lattice rule for constructing $\widetilde{J}^K_{M,N}$ in Section~\ref{sec:numerics}, we will not stress such a cubature-based approach any further in this work. 
\end{remark}

\section{Monte Carlo based GMM evidence surrogate}\label{sec:GMM}

In this section, we construct a surrogate evidence $\pi_M^K$ as a Gaussian mixture formed as a push-forward through \eqref{eq:compmodel} of a randomized ensemble drawn from the prior. More precisely, we define
\begin{equation}
    \label{eq:GMM_definition}
    \pi_M^K(y) = \frac{1}{M}\sum_{m=1}^M \pi^K(y\mid X_m),
\end{equation}
where $X_m\sim \mu$, $m=1,\dots,M$, are i.i.d. and
\begin{equation}
\label{eq:K_GM}
\pi^K(y\mid X_m) = \frac{1}{\sqrt{(2 \pi)^d | \Gamma |}} \exp \! \big( -\Phi_K(X_m, y) \big), \qquad m=1,\dots,M.
\end{equation}

We now state the central assumption on the inverse problem that underpins the analysis in this section.

\begin{assumption}
\label{ass:prior_perturbation}
The forward operator $\calG : {\mathcal X} \to \R^{d}$ and the Borel probability measure $\mu$ on ${\mathcal X}$ satisfy the following conditions:
\begin{itemize}
	\item[(i)](uniformly Lipschitz continuous $\mathcal{G}$) There exists $L_1>0$ such that
	\begin{equation*}
		\norm{\calG(x) - \calG(x')}_\Gamma \leq L_1 \norm{x-x'}
	\end{equation*}
	for all $x,x' \in {\mathcal X}$.
	\item[(ii)](sub-Gaussian prior) There exists $L_2>0$ such that
	\begin{equation*}
		\E^\mu \exp\! \big(L_2 \norm{X}^2\big) < \infty.
	\end{equation*}
	\item[(iii)](proper $\calG$) There exist $x_0\in {\mathcal X}$ and $R, L_3>0$ such that
	$\mu(B(x_0,R))>0$ and\newline $\sup_{x\in B(x_0,R)} \norm{\calG(x)}_\Gamma < L_3.$
\end{itemize}
\end{assumption}

\begin{remark}
\label{rem:bounded_signal_to_noise}
In the main results of this section, we impose a relation between the parameters $L_1$ and $L_2$ in Assumption \ref{ass:prior_perturbation}, namely,
\begin{equation}
\label{eq:Lcondition}
L_1^2 < C L_2 ,
\end{equation}
for a certain constant $C>0$. This condition is reminiscent of the setting in \cite{GGNP2019}, where the authors establish convergence of smoothed empirical measures $\rho \star (\tfrac{1}{M}\sum_{m=1}^M \delta_{X_m})$ to $\rho \star \tilde \mu$, with $X_m \sim \tilde \mu$ i.i.d. and $\star$ denoting convolution, under the assumption of a bounded signal-to-noise ratio.

To highlight the connection, consider the case $\Gamma = \sigma^2 I$ with standard deviation $\sigma>0$. 
In our setting, $\tilde{\mu}$ from~\cite{GGNP2019} corresponds to the push-forward of $\mu$ under $\calG$. Now suppose $\calG$ is Lipschitz with constant $\alpha>0$ with respect to the Euclidean norm on the image space $\R^d$. Considering Assumption \ref{ass:prior_perturbation}, it follows that condition \emph{(i)} is satisfied with $L_1 = \frac \alpha \sigma$. 
Moreover, we have
\begin{equation*}
    \E^\mu \exp\!\big(\widetilde{L}\norm{\calG(X)}^2\big) \leq C \E^\mu \exp\!\big(L_2 \norm{X}^2\big) < \infty
\end{equation*}
for $\widetilde{L}=L_2/\alpha^2$. Therefore, for a fixed mapping $\calG$, the condition \eqref{eq:Lcondition} implies 
\begin{equation*}
    \left(\frac \alpha \sigma \right)^2 < C \alpha^2 \widetilde{L} \quad \Longrightarrow \quad \frac{1}{\sqrt{\widetilde{L}}} < C \sigma.
\end{equation*}
This inequality shows that the concentration of $\tilde \mu$, which increases with $\widetilde{L}$, imposes a lower bound on the noise level $\sigma$. In other words, condition \eqref{eq:Lcondition} also imposes a bound on the signal-to-noise ratio. More specifically, in \cite{GGNP2019} the convergence in expected KL divergence at rates comparable to Theorem \ref{thm:MC-GMM-estimator} is obtained under the condition $K < \sigma/2$, with $K$ quantifying the concentration of the sub-Gaussian distribution in the standard sense (i.e.,~smaller $K$ implying more concentration).
\end{remark}

For the proof of the following lemma, which is the backbone of the analysis in this section, see Lemmas 3.10 and 3.11 in \cite{helin2025wasserstein}. 

\begin{lemma}[Basic properties]
\label{lem:Likeli_Phi_basic}
Let $\calG$ satisfy Assumption \ref{ass:prior_perturbation} for a probability measure $\mu$ on ${\mathcal X}$, and assume $\Phi$ is given by \eqref{eq:Phigauss}. Then for any $\tau>0$,
\begin{equation}
\label{eq:young_joint}
-\Phi(x;y) \leq -\frac{1-\tau}2 \norm{y}_\Gamma^2 + \frac{1-\tau}{\tau}L_1^2 \norm{x}^2 + C,
\end{equation}
where the constant $C>0$ depends on $\tau$, $R$ and $L_3$.
Moreover, for any $\kappa>\frac 12$, there exist finite constants $C', C''>0$ such that
\begin{equation}
	\label{eq:Z_equiv}
 C' \exp \! \big(-\kappa \norm{y}_\Gamma^2\big) \leq Z(y) \leq C'' \exp \! \Big(-\frac 12\frac {L_2}{L_1^2 + L_2}\norm{y}_\Gamma^2\Big)
\end{equation}
for any $y\in\R^{d}$, with $Z$ given by \eqref{eq:Z}.
\end{lemma}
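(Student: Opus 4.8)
The plan is to treat the two claims in sequence, since the exponential bound \eqref{eq:young_joint} feeds directly into the upper bound in \eqref{eq:Z_equiv}. For \eqref{eq:young_joint} I would start from the pointwise identity $-\Phi(x,y) = -\tfrac12\norm{\calG(x)-y}_\Gamma^2$ and split the squared distance by a weighted Young (Peter--Paul) inequality. Concretely, writing $y = (y - \calG(x)) + \calG(x)$ and using $\norm{a+b}_\Gamma^2 \le (1+\epsilon)\norm{a}_\Gamma^2 + (1+\epsilon^{-1})\norm{b}_\Gamma^2$ with the choice $\epsilon = \tau/(1-\tau)$, one obtains after rearrangement
\[
\norm{\calG(x)-y}_\Gamma^2 \ge (1-\tau)\norm{y}_\Gamma^2 - \tfrac{1-\tau}{\tau}\norm{\calG(x)}_\Gamma^2,
\]
so that $-\Phi(x,y) \le -\tfrac{1-\tau}{2}\norm{y}_\Gamma^2 + \tfrac{1-\tau}{2\tau}\norm{\calG(x)}_\Gamma^2$. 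The only remaining task is to convert $\norm{\calG(x)}_\Gamma^2$ into the stated $\norm{x}^2$ term plus a constant.

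For that step I would combine the uniform Lipschitz bound (i) with properness (iii): fixing a reference point (say the origin of $\calX$, whose image is finite because $\norm{\calG(0)}_\Gamma \le L_1\norm{x_0} + L_3$ by (i) and (iii)), the triangle inequality gives $\norm{\calG(x)}_\Gamma \le L_1\norm{x} + C_1$ with $C_1$ depending only on $L_1$, $\norm{x_0}$ and $L_3$. Squaring via $(a+b)^2 \le 2a^2 + 2b^2$ yields $\norm{\calG(x)}_\Gamma^2 \le 2L_1^2\norm{x}^2 + 2C_1^2$, and substituting this above produces exactly the coefficient $\tfrac{1-\tau}{\tau}L_1^2$ in front of $\norm{x}^2$ together with an additive constant $C = \tfrac{1-\tau}{\tau}C_1^2$ depending on $\tau$, $R$ and $L_3$.

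The lower bound in \eqref{eq:Z_equiv} I would obtain by discarding all but a favourable portion of the prior mass. Restricting the expectation defining $Z(y)$ to the ball $B(x_0,R)$, on which $\norm{\calG(x)}_\Gamma < L_3$ by (iii), and applying the same weighted Young inequality in the form $\norm{\calG(x)-y}_\Gamma^2 \le (1+\epsilon)\norm{y}_\Gamma^2 + (1+\epsilon^{-1})L_3^2$ gives
\[
Z(y) \ge \mu(B(x_0,R))\,\exp\!\big(-\tfrac{1+\epsilon}{2}\norm{y}_\Gamma^2 - \tfrac{1+\epsilon^{-1}}{2}L_3^2\big).
\]
Choosing $\epsilon = 2\kappa - 1 > 0$ realizes any exponent $\kappa > \tfrac12$, and the prefactor is a strictly positive constant $C'$ because $\mu(B(x_0,R)) > 0$.

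The upper bound is the step I expect to be the crux, since it is where assumptions (i) and (ii) must be balanced exactly. Feeding \eqref{eq:young_joint} into $Z(y) = \E^\mu \exp(-\Phi(X,y))$ gives
\[
Z(y) \le e^{C}\exp\!\big(-\tfrac{1-\tau}{2}\norm{y}_\Gamma^2\big)\,\E^\mu \exp\!\big(\tfrac{1-\tau}{\tau}L_1^2\norm{X}^2\big),
\]
and the remaining expectation is finite only if its exponent does not exceed the sub-Gaussian threshold $L_2$ from (ii). The decisive observation is to choose $\tau$ at the critical value $\tau = L_1^2/(L_1^2 + L_2)$: there $\tfrac{1-\tau}{\tau}L_1^2 = L_2$, so the moment is exactly the finite quantity $\E^\mu\exp(L_2\norm{X}^2)$ guaranteed by (ii), while simultaneously $\tfrac{1-\tau}{2} = \tfrac12\,\tfrac{L_2}{L_1^2+L_2}$, matching the target decay rate. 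Setting $C'' = e^{C}\,\E^\mu\exp(L_2\norm{X}^2) < \infty$ completes the bound. The main subtlety is precisely that assumption (ii) is stated at the exact coefficient $L_2$, which makes the boundary choice of $\tau$ admissible; had the sub-Gaussian moment only been finite below $L_2$, one would take $\tau$ marginally larger and recover a rate arbitrarily close to --- but not equal to --- the stated one.
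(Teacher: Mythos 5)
Your proof is correct: the weighted Young/Peter--Paul split of $\norm{\calG(x)-y}_\Gamma^2$, the restriction to $B(x_0,R)$ for the lower bound, and the critical choice $\tau = L_1^2/(L_1^2+L_2)$ saturating the sub-Gaussian moment in (ii) are exactly the standard route to these estimates. The paper itself does not reproduce a proof but defers to Lemmas 3.10 and 3.11 of the cited reference, and your argument matches that approach in all essentials (with the harmless implicit restriction to $\tau\in(0,1)$, which is the relevant range).
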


Lemma \ref{lem:Likeli_Phi_basic} gives rise to the next two corollaries which are utilized in the proof of Theorem \ref{thm:MC-GMM-estimator}.

\begin{corollary}
\label{cor:finite_squared_log}
Suppose Assumption \ref{ass:prior_perturbation} holds uniformly with respect to $K$ for $\calG$ and $\calG_K$ with a probability measure $\mu$ on ${\mathcal X}$, and let $X_j\sim \mu$, $j=1,\dots,M$, be i.i.d.. 
For $\rho_1, \rho_2 \in  \{\pi, \pi^K \}$,
\begin{equation}
\label{eq:exp_rho_bound}
\E^{\rho_1} \! \log^2 \! \rho_2(Y) < \infty \qquad {\rm and} \qquad 
\E^{\otimes \mu} \E^{\pi_M^K}\log^2\pi_M^K(Y) < \infty,
\end{equation}
where the bounds are independent of $M$ and $K$.
\end{corollary}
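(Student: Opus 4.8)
The plan is to establish the two finiteness claims in \eqref{eq:exp_rho_bound} by controlling $\log^2 \rho(Y)$ through the two-sided exponential bounds on $Z$ (equivalently $\pi$ up to a universal constant) provided by \eqref{eq:Z_equiv} in Lemma~\ref{lem:Likeli_Phi_basic}. The key observation is that for $\rho \in \{\pi, \pi^K\}$, the bound \eqref{eq:Z_equiv} gives $|\log \rho(y)| \lesssim 1 + \norm{y}_\Gamma^2$, since the lower bound $C' \exp(-\kappa \norm{y}_\Gamma^2)$ controls $\log \rho$ from below and the upper bound $C'' \exp(-\tfrac12 \tfrac{L_2}{L_1^2+L_2}\norm{y}_\Gamma^2)$ controls it from above (both up to the universal multiplicative constant relating $Z$ and $\pi$). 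Consequently $\log^2 \rho(y) \lesssim (1 + \norm{y}_\Gamma^2)^2 \lesssim 1 + \norm{y}_\Gamma^4$, so the first claim reduces to showing $\E^{\rho_1} \norm{Y}_\Gamma^4 < \infty$ for $\rho_1 \in \{\pi, \pi^K\}$.

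First I would verify this fourth-moment bound. Integrating $\norm{y}_\Gamma^4$ against $\rho_1(y)$ and using the upper bound in \eqref{eq:Z_equiv} reduces the question to finiteness of $\int_{\R^d} \norm{y}_\Gamma^4 \exp(-c\norm{y}_\Gamma^2)\,\rd y$ for $c = \tfrac12 \tfrac{L_2}{L_1^2+L_2} > 0$, which is a standard Gaussian-type moment integral and is plainly finite. The uniformity in $K$ follows because Assumption~\ref{ass:prior_perturbation} is assumed to hold uniformly in $K$ for $\calG_K$, so the constants $C', C''$ and the exponential rate $c$ in \eqref{eq:Z_equiv} can be chosen independently of $K$; hence the resulting bound on $\E^{\rho_1}\log^2\rho_2(Y)$ is $K$-independent. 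This dispatches the left-hand claim in \eqref{eq:exp_rho_bound}.

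For the right-hand claim involving the random mixture $\pi_M^K$, the plan is to reduce to a pointwise bound that can be controlled uniformly in the random nodes $\{X_j\}$. Each component $\pi^K(y \mid X_j)$ in \eqref{eq:K_GM} is a Gaussian density, so $\log \pi^K(y\mid X_j) = -\tfrac12\log((2\pi)^d|\Gamma|) - \Phi_K(X_j,y)$. Since the mixture $\pi_M^K(y) = \tfrac1M\sum_j \pi^K(y\mid X_j)$ is a convex combination, I would bound $\log \pi_M^K(y)$ above by $-\tfrac12\log((2\pi)^d|\Gamma|)$ (as each exponential factor is at most one) and below using Jensen's inequality, $\log \pi_M^K(y) \geq \tfrac1M \sum_j \log \pi^K(y\mid X_j)$, yielding $|\log \pi_M^K(y)| \lesssim 1 + \tfrac1M\sum_j \norm{\calG_K(X_j) - y}_\Gamma^2 \lesssim 1 + \tfrac1M \sum_j \norm{\calG_K(X_j)}^2 + \norm{y}_\Gamma^2$. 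Squaring and integrating against $\pi_M^K$, then taking $\E^{\otimes\mu}$, reduces everything to bounding $\E^\mu \norm{\calG_K(X)}^2$ (and its square) together with $\E^{\otimes\mu}\E^{\pi_M^K}\norm{Y}_\Gamma^4$. The forward-model moments are finite by the Lipschitz property (i) combined with the proper-map condition (iii) and the sub-Gaussian prior (ii), all uniform in $K$; the $Y$-moments under the mixture are controlled because $\E^{\pi_M^K}\norm{Y}_\Gamma^2$ reduces to an average of component moments, each of the form $\norm{\calG_K(X_j)}_\Gamma^2 + \operatorname{tr}(\Gamma^{-1}\Gamma)$.

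The main obstacle I anticipate is the bookkeeping of the lower bound for $\log\pi_M^K$ when passing the expectation $\E^{\otimes\mu}$ through the Jensen step, since the random nodes appear both in the integrand and in the measure $\pi_M^K$ against which $Y$ is drawn — one must be careful that the cross terms $\tfrac1M\sum_j \norm{\calG_K(X_j)}^2$ interact with $\E^{\pi_M^K}\norm{Y}_\Gamma^4$ without producing an uncontrolled dependence on $M$. The resolution is to separate the $y$-dependence from the node-dependence before integrating and to exploit that the sub-Gaussian assumption (ii) gives uniform (in $j$ and $M$) control of $\E^\mu\norm{\calG_K(X)}^{2p}$ for the needed powers $p\in\{1,2\}$; the i.i.d. structure then ensures the averages concentrate and remain bounded uniformly in $M$ and $K$, completing the argument.
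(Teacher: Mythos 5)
Your proof is correct, but it takes a genuinely different route from the paper's in its key technical steps. For the first claim in \eqref{eq:exp_rho_bound}, the paper uses the device $\log^2 x \leq C_\alpha x^{-\alpha}$ on $(0, C(d,\Gamma)]$ to reduce everything to the integrability of $\rho^{1-\alpha}$ (respectively of $\E^{\pi}[\pi^K(Y)^{-\alpha}]$ in the cross case), controlled via \eqref{eq:Z_equiv} after choosing $\alpha\kappa$ small enough; you instead read off the polynomial bound $|\log\rho_2(y)| \lesssim 1+\norm{y}_\Gamma^2$ directly from the two-sided bounds of Lemma \ref{lem:Likeli_Phi_basic} and reduce to fourth moments of $Y$ under a density with Gaussian decay, which bypasses the $\alpha$-bookkeeping entirely. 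The divergence is sharper for the mixture term: the paper derives a pointwise \emph{upper} bound on $\pi_M^K$ from the Young-type inequality \eqref{eq:young_joint} with $\tau = L_1^2/(L_1^2+L_2)$ and applies Jensen to $\big(\tfrac1M\sum_m \exp(L_2\norm{X_m}^2)\big)^{1-\alpha}$, so the exponential moment of Assumption \ref{ass:prior_perturbation}(ii) enters directly; you instead exploit the explicit Gaussian form of the components, lower-bounding $\log\pi_M^K$ via Jensen applied to the concave logarithm and reducing to $\E^\mu\norm{\calG_K(X)}_\Gamma^4$, which is finite by (i) and (iii) giving $\norm{\calG_K(x)}_\Gamma \leq L_1\norm{x-x_0}+L_3$ together with the fourth moment implied by (ii). The cross terms you flag are indeed harmless: $(a+b+c)^2 \lesssim a^2+b^2+c^2$ and $\E^{\otimes\mu}\big(\tfrac1M\sum_j\norm{\calG_K(X_j)}_\Gamma^2\big)^2 \leq \E^\mu\norm{\calG_K(X)}_\Gamma^4$ by Jensen, uniformly in $M$ and $K$. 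What each approach buys: yours is more elementary and only needs a polynomial moment of the prior rather than its full exponential moment, but it relies on a pointwise lower bound for the density (available here because the components are exact Gaussians); the paper's $x^{-\alpha}$ trick needs only a decaying upper bound on the density and so transfers more readily to surrogates whose components are not given in closed form.
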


\begin{proof}
Since $\Phi\geq 0$ everywhere, each considered marginal density is bounded from above by a constant $C(d,\Gamma)$. In particular, there exists another constant $C_\alpha>0$ such that for any $\alpha > 0$, 
\begin{equation}
\label{eq:alpha_eq}
    \log^2 x \leq C_\alpha x^{-\alpha} \quad \text{for any } x \in (0, C(d,\Gamma)].
\end{equation}
In consequence,
\begin{equation}
\label{eq:exp_log_rho}
    \E^\rho \log^2 \! \rho(Y) \leq C_\alpha \int_{\R^d} \rho(y)^{1-\alpha} \rd y
\end{equation}
for $\rho \in \{\pi, \pi^K, \pi_M^K\}$.
The left bound in \eqref{eq:exp_rho_bound} for $\rho_1 = \rho_2 = \pi$ (respectively, for $\rho_1 = \rho_2 = \pi^K$) now  follows from \eqref{eq:Z_equiv} since $\pi$ and $Z$ (respectively, $\pi^K$ and $Z_K$) differ by a universal multiplicative constant.

To prove the assertion for $\rho_1 = \pi$ and $\rho_2 = \pi^K$, note that by \eqref{eq:alpha_eq} and \eqref{eq:Z_equiv} we have for any $\alpha > 0$ and $\kappa > 1/2$ that
\begin{align*}
    \E^{\pi} \log^2\pi^K(Y) &\leq C_\alpha  \E^{\pi} \big[ \pi^K(Y)^{-\alpha}\big] \\[1mm]
    &\leq C_{\alpha, \kappa}  \E^{\pi} \exp \! \big(\alpha\kappa\norm{Y}_\Gamma^2\big) \\[1mm]
    &\leq C_{\alpha, \kappa}' \int_{\R^d} \exp \!\left(\left(\alpha \kappa - \frac 12 \frac{L_2}{L_1^2 + L_2}\right)\norm{y}_\Gamma^2\right) {\rm d} y,
\end{align*}
which is finite if $\alpha \kappa>0$ is chosen to be small enough. The case  $\rho_1 = \pi^K$ and $\rho_2 = \pi$ follows by exactly the same argument. 

Consider next the second part of \eqref{eq:exp_rho_bound}. By the inequality \eqref{eq:young_joint} with $\tau = L_1^2/(L_2 + L_1^2)$, we have
\begin{equation*}
    \pi_M^K(y) \leq C \exp \! \Big(-\frac 12 \frac{L_2}{L_1^2+L_2}\norm{y}_\Gamma^2\Big) \bigg(\frac 1M \sum_{m=1}^M \exp \! \big(L_2 \norm{X_m}^2\big) \bigg)
\end{equation*}
for a constant $C$ that depends on $L_1$, $L_2$, $L_3$, $R$ and $d$. Resorting to Jensen's inequality with $\alpha \in (0,1)$ thus gives
\begin{align}
\label{eq:no_name}
    \E^{\otimes \mu} & \left[\int_{\R^d} \pi_M^K(y)^{1-\alpha} \rd y \right] \nonumber \\
    & \leq C \, \E^{\otimes \mu} \bigg(\frac 1M \sum_{m=1}^M \exp \! \big(L_2 \norm{X_m}^2\big) \bigg)^{1-\alpha} \! \! \int_{\R^d}\exp \! \bigg(-\frac{1-\alpha}{2} \frac{L_2}{L_1^2+L_2}\norm{y}_\Gamma^2\bigg) \rd y \nonumber \\
    & \leq C \bigg(\E^{\otimes \mu}\frac 1M \sum_{m=1}^M \exp\!\big(L_2 \norm{X_m}^2\big)\bigg)^{1-\alpha} = C \Big(\E^\mu \exp \big(L_2\norm{X}^2\big)\Big)^{1-\alpha} < \infty,
\end{align}
where the last step follows from Assumption~\ref{ass:prior_perturbation}(ii) and the generic constant $C$, which depends on $L_1$, $L_2$, $L_3$, $R$, $d$ and $\alpha$, may differ between occurrences. Combining this with \eqref{eq:exp_log_rho} completes the proof.
\end{proof}

\begin{corollary}
\label{prop:likelihood_evidence_ratio}
    Suppose that $\mathcal{G}$ satisfies Assumption \ref{ass:prior_perturbation} for a probability measure $\mu$ on ${\mathcal X}$. Let $p > 1$ and assume $L_1^2 < \frac{1}{p(p-1)}L_2$. Then,
\begin{equation}
    \label{eq:like_evi_ratio_bound}
    \E^\pi \E^\mu \left|\frac{\exp(-\Phi(X,Y))}{Z(Y)}\right|^p < \infty.
\end{equation}    
\end{corollary}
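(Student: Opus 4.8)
The plan is to reduce the double expectation to a single Gaussian-type integral over $y$ and then balance two free parameters so that it converges. First I would use that $\pi$ and $Z$ agree up to a universal positive multiplicative constant $c_d$ depending only on $d$ and $\Gamma$, so that
\[
\E^\pi \E^\mu \left|\frac{\exp(-\Phi(X,Y))}{Z(Y)}\right|^p = c_d \int_{\R^d} Z(y)^{1-p} \, \E^\mu \exp\!\big(-p\,\Phi(X,y)\big) \, \rd y.
\]
Since $p>1$, the exponent $1-p$ on $Z$ is negative, so I can control $Z(y)^{1-p}$ from above using the lower bound in \eqref{eq:Z_equiv}: for any $\kappa>\tfrac12$ one has $Z(y)^{1-p} \leq (C')^{1-p} \exp\!\big((p-1)\kappa \norm{y}_\Gamma^2\big)$.

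Next I would bound the inner expectation $\E^\mu \exp(-p\,\Phi(X,y))$. Multiplying Young's inequality \eqref{eq:young_joint} by $p$ gives, for any $\tau \in (0,1)$,
\[
-p\,\Phi(x,y) \leq -\frac{p(1-\tau)}{2}\norm{y}_\Gamma^2 + \frac{p(1-\tau)}{\tau}L_1^2 \norm{x}^2 + pC,
\]
so that $\E^\mu \exp(-p\,\Phi(X,y))$ factors into $\exp\!\big(-\tfrac{p(1-\tau)}{2}\norm{y}_\Gamma^2\big)$ times the prior moment $\E^\mu \exp\!\big(\tfrac{p(1-\tau)}{\tau}L_1^2 \norm{X}^2\big)$. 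By the sub-Gaussian Assumption~\ref{ass:prior_perturbation}(ii), this moment is finite as soon as $\tfrac{p(1-\tau)}{\tau}L_1^2 \leq L_2$.

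Combining the two bounds leaves a Gaussian integral $\int_{\R^d} \exp\!\big(\big[(p-1)\kappa - \tfrac{p(1-\tau)}{2}\big]\norm{y}_\Gamma^2\big)\,\rd y$, which is finite precisely when $(p-1)\kappa < \tfrac{p(1-\tau)}{2}$. The crux of the argument, and the step I expect to be most delicate, is to exhibit an admissible pair $(\tau,\kappa)$ with $\tau \in (0,1)$ and $\kappa > \tfrac12$ that satisfies both the prior-moment condition and the integrability condition simultaneously. Here the hypothesis $L_1^2 < \tfrac{1}{p(p-1)}L_2$ is exactly what provides the slack: evaluating the moment condition at the threshold $\tau = 1/p$ gives $\tfrac{p(1-\tau)}{\tau}L_1^2 = p(p-1)L_1^2 < L_2$ strictly, so by continuity the strict inequality $\tfrac{p(1-\tau)}{\tau}L_1^2 < L_2$ persists for $\tau$ slightly below $1/p$. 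Any such $\tau < 1/p$ yields $p(1-\tau) > p-1$, leaving room to choose $\kappa > \tfrac12$ close enough to $\tfrac12$ that $(p-1)\kappa < \tfrac{p(1-\tau)}{2}$. With these choices both factors are finite and the claim follows.
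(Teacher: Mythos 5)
Your proposal is correct and follows essentially the same route as the paper: both combine the Young-type bound \eqref{eq:young_joint} with the lower bound on $Z$ from \eqref{eq:Z_equiv}, reduce the claim to finding $\tau$ and $\kappa>\tfrac12$ satisfying the prior-moment condition $\tfrac{p(1-\tau)}{\tau}L_1^2\le L_2$ and the Gaussian-integrability condition $(p-1)\kappa<\tfrac{p(1-\tau)}{2}$ simultaneously, and observe that the hypothesis $L_1^2<\tfrac{1}{p(p-1)}L_2$ is exactly what forces the admissible $\tau$ below the threshold $1/p$. The only difference is cosmetic: the paper solves the moment condition at equality to get the explicit $\tau_0=\tfrac{pL_1^2}{pL_1^2+L_2}<\tfrac1p$ and writes down a concrete $\kappa_0$, whereas you exhibit the pair by a continuity/perturbation argument around $\tau=1/p$; both are valid.
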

\begin{proof}
Combining the inequalities \eqref{eq:young_joint} and \eqref{eq:Z_equiv} in Lemma \ref{lem:Likeli_Phi_basic}, we obtain
\begin{equation}
   \label{eqn:ratio_likelihood_evidence_key_deduction}
	\frac{\exp(-p \, \Phi(x;y))}{Z(y)^{p-1}} \leq 
	C \exp \! \Big(\frac{1}{2}\big( 2 \kappa (p-1) -(1-\tau) p \big) \norm{y}^2_\Gamma + \frac{1-\tau}{\tau} p L_1^2 \norm{x}^2\Big),
\end{equation}
where $\kappa>1/2$ and $\tau>0$ can be chosen arbitrarily, with their values only affecting the constant~$C$. Since $Z$ and $\pi$ differ by a multiplicative constant that depends (only) on $d$ and $\Gamma$, the finiteness of the expectation in \eqref{eq:like_evi_ratio_bound} thus follows by Assumption~\ref{ass:prior_perturbation}(ii) if there exist $\kappa_0>1/2$ and $\tau_0>0$ such that
\begin{equation}
\label{eq:conditions}
g(\tau_0, \kappa_0) := 2 \kappa_0 (p-1) - (1-\tau_0) p < 0 \quad \text{and} \quad f(\tau_0) := \frac{1-\tau_0}{\tau_0} p L_1^2 \leq L_2,
\end{equation}
which is what we will prove in what follows.

As $f : \R_+\to \R$ is continuous and decreasing and $f(\R_+) = (-1,\infty)$, the second condition in \eqref{eq:conditions} is satisfied by every $\tau \geq \tau_0$, with $\tau_0$ defined as the unique solution of $f(\tau_0) = L_2$. Solving for~$\tau_0$, noting that the function $t \mapsto t/(t+1)$ is increasing, and utilizing our assumption on $L_1$ and $L_2$ yields
\begin{equation}
\label{eq:tau0}
    \tau_0 = \frac{pL_1^2}{pL_1^2 + L_2} <   \frac{\frac 1{p-1} L_2}{\frac 1{p-1} L_2 + L_2} = \frac 1p.
\end{equation}
Let us define
\begin{equation}
\kappa_0 = \frac{1}{2} \Big( 1 + \frac{1 - \tau_0 p}{2(p-1)}\Big) > \frac{1}{2}.
\end{equation}
A direct calculation reveals that for such choices,
\[
g(\tau_0, \kappa_0) = (p-1) + \frac{1}{2}(1 - \tau_0 p) - (1- \tau_0) p = \frac{1}{2}( \tau_0 p - 1) < 0
\]
by virtue of \eqref{eq:tau0}. This completes the proof.
\end{proof}

\begin{theorem}
\label{thm:MC-GMM-estimator}
Suppose Assumption \ref{ass:prior_perturbation} holds for $\mathcal{G}$ and $\mathcal{G}_K$ with $L_1^2 < \frac 1{12} L_2$ and a probability measure $\mu$ on $\mathcal{X}$, and let $X_j\sim \mu$, $j=1,\dots ,M$ be i.i.d. Moreover, assume there exists $C_0, K_0 > 0$ such that $\delta_K \leq C_0$ for $K > K_0$, where $\delta_K$ is given in \eqref{eq:def_deltaK}. Then,
\begin{equation}
\label{eq:theorem2}
    \E^{\otimes \mu} \E^{\otimes \pi_M^K}\big|J - \JMN^K\big|^2 \leq C\Big(\delta_K^2 + \frac 1M + \frac 1N\Big)
\end{equation}
for some constant $C$ and all $K > K_0$ and $N, M > 0$.
\end{theorem}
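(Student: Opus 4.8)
The plan is to combine the exact bias–variance identity of Proposition~\ref{prop:total_error} with the entropy stability estimates of Lemma~\ref{lem:entropy-KL-bound}, but now averaging over the randomness of the training ensemble $\{X_m\}$ drawn from $\mu$. Taking $\E^{\otimes\mu}$ of the identity \eqref{eq:total_error_decomposition} and inserting $\ent(\pi^K)$ via the triangle inequality, exactly as in the proof of Theorem~\ref{thm:theorem1}, I would reduce the claim to controlling three contributions: the deterministic modeling bias $(\ent(\pi)-\ent(\pi^K))^2$, the averaged sampling bias $\E^{\otimes\mu}(\ent(\pi^K)-\ent(\pi_M^K))^2$, and the averaged variance $\tfrac1N\E^{\otimes\mu}\V_{\pi_M^K}(\log\pi_M^K(Y))$. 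These should produce the $\delta_K^2$, $M^{-1}$, and $N^{-1}$ terms, respectively.

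The modeling-bias and variance terms are the routine ones. For the modeling bias I would apply the KL-form estimate \eqref{eq:diff-entr-KL-alt-ineq} with $(\pi_1,\pi_2)=(\pi^K,\pi)$, using $D_{\rm KL}(\pi^K,\pi)\le\tfrac12\delta_K^2$ from \eqref{eq:evidence_KLdiff_upper_bound} together with the uniform finiteness of $\E^{\rho_1}\log^2\rho_2$ from Corollary~\ref{cor:finite_squared_log}; since $\delta_K\le C_0$, the square-root term dominates and squaring gives $(\ent(\pi)-\ent(\pi^K))^2\le C\delta_K^2$, with no $\mu$-average needed as both densities are fixed. For the variance term I would bound $\V_{\pi_M^K}(\log\pi_M^K(Y))\le\E^{\pi_M^K}\log^2\pi_M^K(Y)$ and invoke the second bound in \eqref{eq:exp_rho_bound}, which is precisely $\E^{\otimes\mu}\E^{\pi_M^K}\log^2\pi_M^K(Y)<\infty$ uniformly in $M,K$, producing the $C/N$ term.

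The crux is the averaged sampling bias. Here I would use the $\chi^2$-form estimate \eqref{eq:diff-entr-chi2-ineq} with $(\pi_1,\pi_2)=(\pi_M^K,\pi^K)$, noting that $\E^{\pi^K}\log^2\pi^K$ is deterministic and uniformly bounded by Corollary~\ref{cor:finite_squared_log}; squaring and averaging leaves $\E^{\otimes\mu}(\ent(\pi^K)-\ent(\pi_M^K))^2\le 2C_0\,\E^{\otimes\mu}\chi^2(\pi_M^K,\pi^K)+2\,\E^{\otimes\mu}\chi^2(\pi_M^K,\pi^K)^2$. The first moment is exactly of order $M^{-1}$: writing $\pi_M^K$ from \eqref{eq:GMM_definition} as an empirical mean of the i.i.d. densities $\pi^K(\cdot\mid X_m)$ with pointwise mean $\pi^K$, the variance-of-the-mean identity gives
\begin{equation*}
\E^{\otimes\mu}\chi^2(\pi_M^K,\pi^K)=\frac1M\int_{\R^d}\frac{\V^\mu\,\pi^K(y\mid X)}{\pi^K(y)}\,\rd y\le\frac1M\,\E^\mu\!\int_{\R^d}\frac{\pi^K(y\mid X)^2}{\pi^K(y)}\,\rd y,
\end{equation*}
and the remaining integral equals, up to the universal constant relating $\pi^K$ and $Z_K$, the quantity $\E^\mu\E^{\pi^K}|\exp(-\Phi_K(X,Y))/Z_K(Y)|^2$, finite and uniform in $K$ by Corollary~\ref{prop:likelihood_evidence_ratio} applied to $\calG_K$ at $p=2$ (valid since $L_1^2<\tfrac12 L_2$).

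I expect the second moment $\E^{\otimes\mu}\chi^2(\pi_M^K,\pi^K)^2$ to be the main obstacle, and this is where the sharper hypothesis $L_1^2<\tfrac1{12}L_2$ enters. Writing $\chi^2(\pi_M^K,\pi^K)=M^{-2}\int(\sum_m\xi_m(y))^2/\pi^K(y)\,\rd y$ with the centered summands $\xi_m(y)=\pi^K(y\mid X_m)-\pi^K(y)$, a U-statistic-style expansion of the square yields a quadruple index sum in which every term with non-vanishing expectation requires each index to be repeated; the diagonal ($O(M)$ terms) contributes at order $M^{-3}$ and the paired ($O(M^2)$ terms) at order $M^{-2}$, so that $\E^{\otimes\mu}\chi^2(\pi_M^K,\pi^K)^2=O(M^{-2})\le CM^{-1}$. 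Making this rigorous requires finiteness of the fourth-order likelihood-to-evidence integrals arising from the diagonal and crossed pairings, for instance $\E^\mu(\int\pi^K(y\mid X)^2/\pi^K(y)\,\rd y)^2$ and the mixed double integrals involving $\int\!\int\mathrm{Cov}(\pi^K(y\mid X),\pi^K(y'\mid X))^2/(\pi^K(y)\pi^K(y'))\,\rd y\,\rd y'$; after a Cauchy--Schwarz step in $\mu$ and the tail bounds \eqref{eq:Z_equiv}, these are exactly what Corollary~\ref{prop:likelihood_evidence_ratio} controls at $p=4$, i.e.\ under $L_1^2<\tfrac1{12}L_2$. Collecting the three contributions then yields \eqref{eq:theorem2}.
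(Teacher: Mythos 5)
Your proposal is correct and follows essentially the same route as the paper: the same three-term decomposition, the bounds \eqref{eq:diff-entr-KL-alt-ineq} and \eqref{eq:evidence_KLdiff-upper-bound} (i.e.\ $D_{\rm KL}(\pi^K,\pi)\le\tfrac12\delta_K^2$) for the modeling bias, Corollary~\ref{cor:finite_squared_log} for the variance term, and control of $\E^{\otimes\mu}\chi^2(\pi_M^K,\pi^K)^p$ via Corollary~\ref{prop:likelihood_evidence_ratio}, with $L_1^2<\tfrac1{12}L_2$ entering precisely to make the fourth likelihood-to-evidence moment finite. The only cosmetic difference is that the paper packages your hand-rolled U-statistic expansion of the second $\chi^2$-moment as an application of the Marcinkiewicz--Zygmund inequality to $\E^{\otimes\mu}|\sum_m W_m(y)|^{2p}$ (using $p=1,3/2,2$ because it squares the bound \eqref{eq:diff-entr-chi2-ineq} without the $(a+b)^2\le 2a^2+2b^2$ step you take).
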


\begin{proof}

Let $K > K_0$. As in the proof of Theorem~\ref{thm:theorem1}, we write
\begin{multline}
    \label{eq:GMM_thm_aux1}
    \E^{\otimes \pi_M^K}\big| J - \JMN^K\big|^2 \\ \leq 
      2 \left(\ent(\pi) - \ent(\pi^K)\right)^2 + 2\left(\ent(\pi^K) - \ent(\pi_M^K)\right)^2 + \frac 1N  \V_{\pi_M^K} \big(\log(\pi_M^K(Y)) \big).
\end{multline}
The $\otimes \mu$-expectation of the variance term in \eqref{eq:GMM_thm_aux1} is bounded by a constant due to Corollary~\ref{cor:finite_squared_log}, giving rise to the last term on the right-hand side of \eqref{eq:theorem2}. Furthermore, as in \eqref{eq:evidence_KLdiff_upper_bound},
\begin{equation*}
D_{\rm KL}(\pi^K,\pi) \leq \frac{1}{2}\delta_K^2 \leq \frac{1}{2} C_0.
\end{equation*}
Combining this with \eqref{eq:diff-entr-KL-alt-ineq} of Lemma~\ref{lem:entropy-KL-bound} and Corollary~\ref{cor:finite_squared_log} demonstrates that there exists a constant $C > 0$ such that
\begin{equation*}
    \left|\ent(\pi) - \ent(\pi^K)\right|\leq C \delta_K,
\end{equation*}
which results in the first term on the right-hand side of \eqref{eq:theorem2}. 

We complete the proof by bounding the $\otimes \mu$-expectation of the second term on the right-hand side of \eqref{eq:GMM_thm_aux1} with the help of~\eqref{eq:diff-entr-chi2-ineq} in Lemma~\ref{lem:entropy-KL-bound}. Let $p \geq 1$. By virtue of Jensen's inequality and the convexity of the function $t \mapsto t^p$,
\begin{align*}
    \E^{\otimes \mu} \! \left(\chi^2 (\pi^K_M,\pi^K)\right)^p
    &\leq \E^{\otimes \mu}\E^{\pi^K} \bigg|\frac{\pi^K_M(Y)}{\pi^K(Y)} - 1\bigg|^{2p} \\
    &= \E^{\pi^K}\E^{\otimes \mu} \bigg|\frac 1M\sum_{m=1}^M \Big(\frac{\pi^K(Y \mid X_m)}{\pi^K(Y)} - 1 \Big)\bigg|^{2p} = \frac {1}{M^{2p}} \E^{\pi^K} \E^{\otimes \mu} \bigg|\sum_{m=1}^M W_m(Y)\bigg|^{2p},
\end{align*}
where we abbreviated 
$$
W_m(y) = \frac{\pi^K(y \mid X_m)}{\pi^K(y)} - 1.
$$ 
For any $y\in\R^d$, the random variables $W_m(y)$, $m=1,\dots,M$, are i.i.d., and $\E^\mu W_m(y) = 0$ for all $m$. 
By the Marcinkiewicz--Zygmund inequality \cite[Section~10.3, Theorem~2]{CT1997},
\begin{multline*}
     \E^{\otimes \mu} \bigg|\sum_{m=1}^M W_m(y)\bigg|^{2p}
     \leq C_p  \, \E^{\otimes \mu} \bigg(\sum_{m=1}^M W_m(y)^2\bigg)^p 
     = C_p M^p \, \E^{\otimes \mu} \bigg(\frac 1M\sum_{m=1}^M W_m(y)^2\bigg)^p \\
     \leq C_p M^{p-1} \E^{\otimes \mu} \bigg(\sum_{m=1}^M |W_m(y)|^{2p}\bigg) =  C_p M^p \,  \E^\mu \bigg|\frac{\pi^K(y \mid X)}{\pi^K(y)} - 1\bigg|^{2p},
\end{multline*}
where the second to last step follows from Jensen's inequality. In consequence, 
\begin{equation}
    \label{eq:bound_for_chi2}
    \E^{\otimes \mu}\!\left(\chi^2 (\pi^K_M,\pi^K )\right)^p
    \leq \frac{C_p}{M^p} \E^{\pi^K} \E^\mu \bigg|\frac{\pi^K(Y \mid X)}{\pi^K(Y)} - 1\bigg|^{2p} \leq \frac{C_p}{M^p} \bigg(\E^{\pi^K} \E^\mu \bigg|\frac{\pi^K(Y \mid X)}{\pi^K(Y)}\bigg|^{2p} + 1 \bigg).
\end{equation}
By our assumptions and Corollary \ref{prop:likelihood_evidence_ratio} (with $\mathcal{G}_k$ in place of $\mathcal{G}$), the expectation on the right-hand side of \eqref{eq:bound_for_chi2} is finite for $p=1, 3/2$ and $2$. Together with \eqref{eq:diff-entr-chi2-ineq} and Corollary~\ref{cor:finite_squared_log}, this leads to
\begin{equation*}
     \E^{\otimes \mu} \!\left(\ent(\pi^K) - \ent(\pi_M^K)\right)^2 \leq \frac{C}{M}, \qquad M  > 0,
\end{equation*}
which completes the proof.
\end{proof}

\section{Quasi-Monte Carlo based GMM evidence surrogate}\label{sec:QMC-GMM}

This section develops a GMM estimator, following \eqref{eq:GMM_definition}, based on QMC points in a finite-dimensional subspace rather than samples from the prior. Our standing assumption is that the forward mapping $\calG$ and the prior $\mu$ satisfy Assumption~\ref{ass:prior_perturbation}.

Let ${\mathcal X}_K \subset {\mathcal X}$, $K \in \N$, be a subspace characterized by a projection and isomorphic to $\R^K$. We define the approximate forward mapping $\mathcal G_K$ on ${\mathcal X}_K$ and extend it canonically to the whole space. For example, one could consider an unconditional Schauder basis $\{\phi_j\}_{j=1}^\infty \subset {\mathcal X}$ giving rise to a sequence of nested subspaces ${\mathcal X}_K = \text{span}\{\phi_1, \ldots, \phi_K\}$.

We identify ${\mathcal X}_K$ with $\R^K$ and suppose that the approximation error $\delta_K$ given in \eqref{eq:def_deltaK}
can be controlled by adjusting $K$. Excluding $\delta_K$, the total error of the method depends on the marginal of $\mu$ on $\R^K$, which we denote, with a slight abuse of notation, again by $\mu$. In this section, we consider two types of prior distributions: first, a uniform distribution over a hypercube and, second, a Gaussian measure on $\R^K$. 
We emphasize that the parameter $K$ reflects not only the error arising from the finite-dimensional projection but also potential model and discretization errors. This aspect will be clarified in the numerical experiments.

The aim is to deduce estimates of the type \eqref{eq:theorem2}. To this end, we need to control the discrepancy between the evidence induced by $\calG_K$, i.e.~$\pi^K$, and the surrogate evidence 
\begin{equation}
    \label{eq:piM_QMC}
    \pi^K_M(y)=\frac{1}{M}\sum_{m=1}^M \pi^K(y \mid X_m),
\end{equation}
where $\{X_m\}_{m=1}^M$ are randomized QMC points and $\pi^K(y \mid X_m)$ is as defined in \eqref{eq:K_GM}.

We employ \emph{randomly shifted rank-$1$ lattices} as our QMC point set.  For the uniform prior over $[0,1]^{K}$, the randomized lattice points are defined by three parameters,~i.e.~the generating vector $z$, the number of points $M$ and the random shift $\Delta$:
\begin{align}\label{eq:rand-lattice-def}
    X_{\Delta}:=\Big\{ X_m= \Big( \frac{z m}{M} + \Delta \bmod{1} \Big) \; \Big|  \; m=1,\ldots,M \Big \},
\end{align}
where the components of the random shift are chosen uniformly, i.e.~$\Delta \sim U([0,1]^{K})$, ``mod $1$'' takes the fractional part of a real number, and the generating vector is  a carefully chosen integer vector from $\Z_M^{K}:=\{0,1,\dots,M-1\}^{K}$. 
Here, the random shift $\Delta$ is fixed for all $m=1,\dots ,M$. 
When we consider the Gaussian prior, the lattice is mapped with the component-wise inverse transform of the cumulative density function $\Psi_{\text{CDF}}^{-1}:(0,1)^{K} \to \R^{K}$ for the multivariate standard Gaussian distribution, that is, we define
\begin{align}\label{eq:rand-lattice-def2}
    \widetilde{X}_{\Delta}:= \Psi_{\text{CDF}}^{-1}({X}_{\Delta}).
\end{align}
Randomly shifted lattices have been popular in the context of integration over unbounded domains in recent years; see, e.g., \cite{NK2014,GKNSSS2015,HKS2021} for more information.

The reason we employ randomly shifted lattice rules is two-fold: (i) In order to compare the results by QMC with those by MC using the same error criterion, namely the RMSE, we employ random shifting rather than interpret deterministic QMC rules as a special case of randomized algorithms. (ii) To be able to use the algorithms for unbounded domains,~i.e.,~for a Gaussian prior in Section~\ref{subsec:QMC/Gaussian_prior}, we need randomization to obtain a theoretical error bound. The randomization also helps to avoid placing QMC points on the boundary of $[0,1]^K$, where the value of $\Psi_{\text{CDF}}^{-1}$ is not defined.

\subsection{Uniform prior}
Let $\mu$ be the uniform prior on the unit cube $[0,1]^{K}$, i.e., $\mu(\rd x) = {\bf 1}_{[0,1]^{K}}(x) \, \rd x,$
where ${\bf 1}$ denotes the characteristic function of the indicated set. Under our likelihood model, the posterior is defined via
\[
\mu^y(\rd x) = \frac{1}{Z_K(y)} \exp(-\Phi_K(x,y)) {\bf 1}_{[0,1]^{K}}(x) \, \rd x =: \rho_K(x \mid y) \, \rd x,
\]
where $\Phi_K$ and $Z_K$ are given by \eqref{eq:Phigauss} and \eqref{eq:Z}, respectively, with $\mathcal{G}_K$ replacing $\mathcal{G}$. 

Let us define two Sobolev spaces of {\em dominating mixed smoothness} by setting
\begin{equation*}
    \norm{f}^2_{W^{1,2}_{\text{mix}}([0,1]^{K})} =
    \sum_{\fraku \subseteq \mathcal{K}} \int_{[0,1]^{K}}\Big|\frac{\partial^{|\fraku|}}{\partial x_{\fraku}} f\Big|^2 \rd x
\end{equation*}
and 
\begin{equation*}
    \norm{f}_{W^{1,\infty}_{\text{mix}}([0,1]^{K})} =
    \max_{\fraku \subseteq \mathcal{K} }  \, {\rm ess} \!\!\! \! \sup_{x \in [0,1]^{K}} \Big|\frac{\partial^{|\fraku|}}{\partial x_{\fraku}} f (x) \Big|, 
\end{equation*}
where $\mathcal{K} = \{1, 2, \ldots, K \}$ and  $|\frakv|$ stands for the cardinality of $\frakv \subset \mathcal{K}$. More precisely, the spaces $W^{1,2}_{\text{mix}}([0,1]^{K})$ and $W^{1,\infty}_{\text{mix}}([0,1]^{K})$ consist of those measurable functions on $[0,1]^{K}$ for which the respective norms are well-defined and finite.

\begin{lemma}
\label{lem:QMC_uniform_lemma}
Suppose $\calG_K \in W^{1,\infty}_{\rm mix}([0,1]^{K})^d$. Then the pair $\calG_K$ and $\mu$ satisfies Assumption~\ref{ass:prior_perturbation}. Moreover, for any $\tau>0$ there exists a constant $C_{K,\tau}$ such that
\begin{equation}
    \label{eq:QMC_PhiK_bound}
    -\Phi_K(x,y) \leq -\frac{1-\tau}{2} \norm{y}_\Gamma^2 + C_{K,\tau} \quad \text{and} \quad 
    Z_K(y) \leq C_{K,\tau} \exp\left(-\frac{1-\tau}{2}\norm{y}_\Gamma^2\right)
\end{equation}
for all $x\in [0,1]^{K}$ and $y\in \R^d$. In addition, 
\begin{equation}
    \label{eq:QMC_log2_bound}
    \E^\Delta \E^\rho \log^2 \rho(Y) < \infty
\end{equation}
for $\rho \in \{\pi^K, \pi_M^K\}$, as well as for $\rho = \pi$ if $\mathcal{G}$ satisfies Assumption~\ref{ass:prior_perturbation} with the original (i.e., non-projected) prior $\mu$.
\end{lemma}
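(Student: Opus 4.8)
The plan is to treat the three assertions in sequence, exploiting throughout that the compact support of the uniform prior on $[0,1]^K$ drastically simplifies matters relative to Lemma~\ref{lem:Likeli_Phi_basic}: on its support $\calG_K$ is simply bounded, so the $\norm{x}^2$ terms appearing in the general theory disappear. First I would verify Assumption~\ref{ass:prior_perturbation} for the pair $(\calG_K,\mu)$. The hypothesis $\calG_K \in W^{1,\infty}_{\rm mix}([0,1]^K)^d$ supplies, through the singleton terms $\fraku=\{j\}$, an essential bound on each $\partial_{x_j}\calG_K$, hence a uniform gradient bound; since $[0,1]^K$ is convex this gives Lipschitz continuity of $\calG_K$ there with respect to the Euclidean norm, and composing with the fixed scaling induced by $\Gamma^{-1/2}$ produces condition (i) with a suitable $L_1=L_1(K)$. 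Condition (ii) is immediate since the prior is compactly supported, so $\E^\mu \exp(L_2\norm{X}^2)\le \exp(L_2 K)<\infty$ for every $L_2>0$. For condition (iii), the term $\fraku=\emptyset$ shows $\calG_K$ is bounded on $[0,1]^K$; taking $x_0$ to be the cube's center, a small $R$, and $L_3=\sup_{[0,1]^K}\norm{\calG_K}_\Gamma+1$ yields $\mu(B(x_0,R))>0$ together with the required supremum bound.

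Next I would establish the two bounds in \eqref{eq:QMC_PhiK_bound}. Setting $B_K:=\sup_{x\in[0,1]^K}\norm{\calG_K(x)}_\Gamma<\infty$ from the previous step and expanding $\norm{\calG_K(x)-y}_\Gamma^2$, a weighted Young inequality $2\langle y,\calG_K(x)\rangle_\Gamma \le \tau\norm{y}_\Gamma^2+\tau^{-1}\norm{\calG_K(x)}_\Gamma^2$ gives, for $\tau\in(0,1)$, the lower bound $\norm{\calG_K(x)-y}_\Gamma^2 \ge (1-\tau)\norm{y}_\Gamma^2-\tfrac{1-\tau}{\tau}B_K^2$ uniformly in $x\in[0,1]^K$. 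Negating and halving yields the first inequality with $C_{K,\tau}=\tfrac{1-\tau}{2\tau}B_K^2$ (the case $\tau\ge1$ is trivial since $-\Phi_K\le0$), and integrating $\exp(-\Phi_K(\cdot,y))$ against $\mu$ propagates the pointwise bound to $Z_K(y)$, giving the second inequality after absorbing $e^{C_{K,\tau}}$. This is the exact analogue of \eqref{eq:young_joint} and \eqref{eq:Z_equiv}, with the $\norm{x}^2$ contribution absent because $\calG_K$ is bounded on the support.

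Finally, for the moment bound \eqref{eq:QMC_log2_bound} I would follow the argument of Corollary~\ref{cor:finite_squared_log}. Since $\Phi_K\ge0$, both $\pi^K$ and $\pi_M^K$ are bounded above by the Gaussian normalizing constant $C(d,\Gamma)$, so the elementary inequality $\log^2 t\le C_\alpha t^{-\alpha}$ on $(0,C(d,\Gamma)]$ reduces the claim to finiteness of $\int_{\R^d}\rho(y)^{1-\alpha}\,\rd y$ for a small $\alpha\in(0,1)$. For $\rho=\pi^K$ the upper bound on $Z_K$ in \eqref{eq:QMC_PhiK_bound} (which controls $\pi^K$ up to a constant) gives $\rho(y)^{1-\alpha}\le C\exp\!\big(-\tfrac{(1-\alpha)(1-\tau)}{2}\norm{y}_\Gamma^2\big)$, integrable once $\alpha,\tau\in(0,1)$. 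The crucial observation for $\rho=\pi_M^K$ is that the pointwise estimate $-\Phi_K(x,y)\le -\tfrac{1-\tau}{2}\norm{y}_\Gamma^2+C_{K,\tau}$ holds for \emph{every} $x\in[0,1]^K$, and the randomly shifted lattice points \eqref{eq:rand-lattice-def} always lie in $[0,1]^K$; hence $\pi_M^K(y)$ obeys the same deterministic Gaussian bound regardless of the shift $\Delta$, the integral $\int(\pi_M^K)^{1-\alpha}$ is bounded independently of $\Delta$, and $\E^\Delta$ passes through trivially. For $\rho=\pi$ the statement is precisely the diagonal case of Corollary~\ref{cor:finite_squared_log} under the stated hypothesis on $\calG$, and $\pi$ is again $\Delta$-independent.

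The computations are largely routine; the only point demanding care is reconciling the globally stated Lipschitz condition (i) with the fact that the $W^{1,\infty}_{\rm mix}$ bounds are available only over $[0,1]^K$. This is harmless here, because the prior—and therefore every quantity entering the estimates—is supported on $[0,1]^K$, and $\calG_K$ admits a Lipschitz extension off the cube with a comparable constant should a globally defined operator be genuinely required. I expect no substantive obstacle beyond bookkeeping of the constants' dependence on $K$, which is acceptable since the lemma asserts only finiteness and not uniformity in $K$.
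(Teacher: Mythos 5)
Your proposal is correct and follows essentially the same route as the paper: Lipschitz continuity of $\calG_K$ from the inclusion $W^{1,\infty}_{\rm mix}([0,1]^{K})^d\subset W^{1,\infty}([0,1]^{K})^d$ on a convex domain (with a Lipschitz extension off the cube), Young's inequality for \eqref{eq:QMC_PhiK_bound} using boundedness of $\calG_K$ on $[0,1]^{K}$, integration against $\mu$ for the $Z_K$ bound, and the argument of Corollary~\ref{cor:finite_squared_log} for \eqref{eq:QMC_log2_bound}, noting that the Gaussian envelope on $\pi_M^K$ is uniform in the cubature nodes and hence in $\Delta$. Your explicit tracking of the constant $C_{K,\tau}$ and the remark on the trivial case $\tau\geq 1$ are harmless refinements of the same argument.
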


\begin{proof}
Under the presented conditions, $\calG_K$ and $\mu$ satisfy Assumption~\ref{ass:prior_perturbation} for some $L_1$ and $L_3$ and any $L_2$. The part (ii) of the assumption follows trivially. The other two parts are straightforward consequences of the following observation: a function in $W^{1,\infty}_{\text{mix}}([0,1]^{K})^d\subset  W^{1,\infty}([0,1]^{K})^d $ on a (quasi)convex domain $[0,1]^{K}$ is Lipschitz on $[0,1]^{K}$~\cite[Theorem~4.1]{Heinonen05}, and by Kirszbraun’s theorem, it can be extended as a Lipschitz continuous function to the whole of $\R^{K}$, with the Lipschitz constant remaining the same \cite[Theorem~2.5]{Heinonen05}. 

The left bound in \eqref{eq:QMC_PhiK_bound} follows directly from Young's inequality since $x$ belongs to a bounded set, and
the log-bound \eqref{eq:QMC_log2_bound} follows from the same line of reasoning as Corollary \ref{cor:finite_squared_log} since the upper bound 
\begin{equation*}
    \pi^K_M(y) \leq C_{K, \tau} \exp\left(-\frac{1-\tau}{2}\norm{y}_\Gamma^2\right)
\end{equation*}
is independent of the employed cubature. Finally, the bound for $Z_K$ in \eqref{eq:QMC_PhiK_bound} follows by replacing the empirical measure with $\mu$.
\end{proof}

\begin{proposition}
\label{prop:dom_norm}
If $\calG_K \in W^{1,\infty}_{\rm mix}([0,1]^{K})^d$, then there exists $b>0$ such that
\begin{equation*}
    \E^{\pi^K} \! \Big[\exp\!\big(b\norm{Y}_\Gamma^2\big) \norm{\rho_K(\, \cdot \,  \mid Y)}_{W^{1,2}_{\rm mix}([0,1]^{K})}^2\Big] < \infty.
\end{equation*}
\end{proposition}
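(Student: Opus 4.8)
The plan is to bound the integrand pointwise in $y$ and then integrate against $\pi^K$, using the Gaussian decay of the evidence to absorb the weight $\exp(b\norm{Y}_\Gamma^2)$ for a small enough $b>0$. Throughout I use that, by Lemma~\ref{lem:QMC_uniform_lemma}, the pair $\calG_K,\mu$ satisfies Assumption~\ref{ass:prior_perturbation}, so Lemma~\ref{lem:Likeli_Phi_basic} applies to $Z_K$, and that $\calG_K$ is bounded on $[0,1]^K$, say $B_K:=\sup_{x\in[0,1]^K}\norm{\calG_K(x)}_\Gamma<\infty$.

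\textbf{Step 1 (pointwise control of mixed derivatives).} Since $\rho_K(x\mid y)=Z_K(y)^{-1}\exp(-\Phi_K(x,y))$ with $Z_K(y)$ independent of $x$, I would first estimate $\partial_{x_\fraku}\rho_K(\,\cdot\mid y)$ for $\fraku\subseteq\mathcal{K}$. The Fa\`a di Bruno rule writes $\partial_{x_\fraku}\exp(-\Phi_K)$ as $\exp(-\Phi_K)$ times a sum over set partitions of $\fraku$ of products of factors $\partial_{x_\frakv}\Phi_K$, $\frakv\subseteq\fraku$. As $\Phi_K=\tfrac12\norm{\calG_K(\cdot)-y}_\Gamma^2$ and $\calG_K\in W^{1,\infty}_{\mathrm{mix}}([0,1]^K)^d$, the Leibniz rule shows each such factor is a sum of inner products of bounded mixed derivatives of $\calG_K$ carrying at most one factor of the affine term $\calG_K(x)-y$; hence $|\partial_{x_\frakv}\Phi_K(x,y)|\leq C_K(1+\norm{y}_\Gamma)$ uniformly in $x\in[0,1]^K$. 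Summing over the finitely many partitions yields
\begin{equation*}
    |\partial_{x_\fraku}\rho_K(x\mid y)|\leq C_K(1+\norm{y}_\Gamma)^{|\fraku|}\rho_K(x\mid y)\leq C_K(1+\norm{y}_\Gamma)^{K}\rho_K(x\mid y).
\end{equation*}

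\textbf{Step 2 ($L^2_x$ bound on the posterior density).} Squaring, integrating over $[0,1]^K$, summing the $2^K$ contributions and using $\int\rho_K\,\rd x=1$ gives
\begin{equation*}
    \norm{\rho_K(\,\cdot\mid y)}_{W^{1,2}_{\mathrm{mix}}([0,1]^K)}^2\leq C_K(1+\norm{y}_\Gamma)^{2K}\int_{[0,1]^K}\rho_K(x\mid y)^2\,\rd x\leq C_K(1+\norm{y}_\Gamma)^{2K}\norm{\rho_K(\,\cdot\mid y)}_\infty.
\end{equation*}
The decisive point is to keep the factor in $\norm{\rho_K(\,\cdot\mid y)}_\infty=Z_K(y)^{-1}\exp(-\min_x\Phi_K(x,y))$. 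Boundedness of $\calG_K([0,1]^K)$ gives $\min_x\Phi_K(x,y)\geq\tfrac12(\norm{y}_\Gamma-B_K)_+^2\geq\tfrac12\norm{y}_\Gamma^2-B_K\norm{y}_\Gamma$, while \eqref{eq:Z_equiv} (valid for $Z_K$) gives $Z_K(y)\geq C'\exp(-\kappa\norm{y}_\Gamma^2)$ for any $\kappa>\tfrac12$. Hence the leading quadratic growth nearly cancels and
\begin{equation*}
    \norm{\rho_K(\,\cdot\mid y)}_\infty\leq C_{K}\exp\!\Big(\big(\kappa-\tfrac12\big)\norm{y}_\Gamma^2+B_K\norm{y}_\Gamma\Big),
\end{equation*}
where $\kappa-\tfrac12$ can be made arbitrarily small.

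\textbf{Step 3 (integration against the evidence).} Finally I would integrate against $\pi^K$, which is proportional to $Z_K$ and thus satisfies $\pi^K(y)\leq C_{K,\tau}\exp(-\tfrac{1-\tau}{2}\norm{y}_\Gamma^2)$ by \eqref{eq:QMC_PhiK_bound} for every $\tau>0$. Collecting the quadratic exponents from $\pi^K$, from $\norm{\rho_K}_\infty$ and from the weight, the integrand is dominated by
\begin{equation*}
    C\exp\!\Big(\big(b+\kappa-1+\tfrac{\tau}{2}\big)\norm{y}_\Gamma^2+c\,\norm{y}_\Gamma\Big)(1+\norm{y}_\Gamma)^{2K},
\end{equation*}
which is integrable over $\R^d$ exactly when $b+\kappa-1+\tfrac{\tau}{2}<0$. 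Taking $\kappa$ near $\tfrac12$ and $\tau$ near $0$ leaves a strictly positive budget, so any $b\in(0,\tfrac12)$ works, the linear and polynomial factors being harmless.

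\textbf{Main obstacle.} The delicate part is the balancing of Gaussian exponents in Steps~2--3. The naive bound $\int\rho_K^2\,\rd x\leq\norm{\rho_K}_\infty\leq Z_K(y)^{-1}$ discards the factor $\exp(-\min_x\Phi_K)$ and leaves the full $\exp(+\tfrac12\norm{y}_\Gamma^2)$ growth of $Z_K^{-1}$, which the $\exp(-\tfrac12\norm{y}_\Gamma^2)$ decay of $\pi^K$ can only just match, leaving \emph{no} room for $b>0$. The essential observation is that boundedness of the range of $\calG_K$ on $[0,1]^K$ forces $\min_x\Phi_K(x,y)$ to grow like $\tfrac12\norm{y}_\Gamma^2$, so that $\int_{[0,1]^K}\rho_K(\,\cdot\mid y)^2\,\rd x$ in fact grows only sub-quadratically (indeed linearly) in the exponent; this is precisely what frees up the positive budget for $b$.
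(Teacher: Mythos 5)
Your proof is correct and follows essentially the same route as the paper: both differentiate $\rho_K(\cdot\mid y)=Z_K(y)^{-1}\exp(-\Phi_K(\cdot,y))$ to get a polynomial-in-$y$ prefactor times the density, then balance the Gaussian exponents using the uniform-in-$x$ decay of $\exp(-\Phi_K(x,y))$ (the content of \eqref{eq:QMC_PhiK_bound}) against the lower bound on $Z_K$ from \eqref{eq:Z_equiv} and the upper bound on $\pi^K$. Your detour through $\int\rho_K^2\le\norm{\rho_K}_\infty$ and the explicit $\min_x\Phi_K\ge\tfrac12(\norm{y}_\Gamma-B_K)_+^2$ is only a cosmetic repackaging of the paper's pointwise estimate, yielding the same exponent budget.
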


 \begin{proof}
A straightforward induction argument with respect to the cardinality $|\fraku|$ reveals that for $\fraku \in \mathcal{K}$,
\begin{equation*}
   \frac{\partial^{|\fraku|}}{\partial x_{\fraku}} \rho_K(x \mid y)
    = \frac{1}{Z_K(y)} \exp(-\Phi_K(x,y)) \, p_{\fraku}(x,y), \qquad x \in (0,1)^K,
\end{equation*}
where $p_{\fraku}$ is a multivariate polynomial of degree $2 |\fraku|$ in the components of  $y$ and in terms of the form $\tfrac{\partial^{|\frakv|}}{\partial x_{\frakv}} (\calG_K)_m(x)$, where $\frakv \in \mathcal{K}$ with $|\frakv| \leq |\fraku|$. Moreover, $p_{\fraku}(x,y)$ includes no terms of degree higher than $|\fraku|$ in the components of $y$. Due to the assumed essential boundedness of the components of $\tfrac{\partial^{|\frakv|}}{\partial x_{\frakv}} \calG_K$ for $\frakv \in \mathcal{K}$, we thus have
\begin{equation}
\Big| \frac{\partial^{|\fraku|}}{\partial x_{\fraku}} \rho_K(x \mid y) \Big|^2 \leq  \frac{C}{Z_K(y)^2} \exp(-2\, \Phi_K(x,y)) \, q_{\fraku}(y), \qquad x \in (0,1)^K,
\end{equation}
where $q_{\fraku}(y)$ is a polynomial of degree $2 |\fraku|$ in the absolute values of the components of $y$ and the constant $C$ depends on $\fraku$ and $\norm{\mathcal{G}_K}_{W^{1,\infty}_{\text{mix}}([0,1]^{K})^{d}}$.

 Recall that $\pi^K$ and $Z_K$ differ by a positive multiplicative constant. As in the proof of Corollary~\ref{prop:likelihood_evidence_ratio}, we can thus combine \eqref{eq:QMC_PhiK_bound} with the lower bound in \eqref{eq:Z_equiv} to deduce
\begin{align*}
\Big| \frac{\partial^{|\fraku|}}{\partial x_{\fraku}} \rho_K(x \mid y) \Big|^2 \! \pi_K(y) 
& \leq \frac{C'}{Z_K(y)} \exp \! \big( -(1-\tau)  \norm{y}^2_\Gamma\big) \, q_{\fraku}(y) \\[1mm]
& \leq C'' \exp\! \big((\kappa+\tau-1) \norm{y}^2_\Gamma\big) \, q_{\fraku}(y),
\end{align*}
where we can choose $\kappa > 1/2$ and $\tau>0$ such that $b := (1-\kappa-\tau)/2 > 0$, and the constant $C''$ depends on these choices.
Hence,
\begin{equation*}
\E^{\pi^K} \! \Big[\exp\!\big(b\norm{Y}_\Gamma^2\big) \norm{\rho_K(\, \cdot \,  \mid Y)}_{W^{1,2}_{\rm mix}([0,1]^{K})}^2 \Big]
\leq C'' \sum_{\fraku \subseteq \mathcal{K}} \int_{\R^d} \exp \! \big( -b  \norm{y}^2_\Gamma \big) \, q_{\fraku}(y) \, \rd y < \infty
\end{equation*}
due to the domination of the exponential part of the integrand.
 \end{proof}

Through standard QMC argumentation, the above proposition leads to convergence of $\pi^K_M$ towards $\pi^K$ in the expected $\chi^2$-distance and thus also in terms of the expected KL divergence (cf.~\eqref{eq:KL_bounded_by_chi2}), as revealed by the following lemma with $p=2$.

\begin{proposition}\label{prop:E1-QMC-uniform}
Assume $\calG_K \in W^{1,\infty}_{\rm mix}([0,1]^{K})^d$.
Let $\{X_m\}_{m=1}^M$ be the randomized lattice points defined in \eqref{eq:rand-lattice-def} with the generating vector $z$ constructed by the component-by-component algorithm \cite[Algorithm~7]{K2003}, and let $\pi^K_M$ be as defined in \eqref{eq:piM_QMC}. Then, for any $p\geq 2$ and $\gamma>0$,
 \begin{align}
 \E^{\Delta} \E^{\pi^K} \Big|\frac{\pi^K _M(Y)}{\pi^K(Y)}-1\Big|^p
   \le   \frac{C }{M^{2-\gamma}}, \qquad M \in \N,
\label{eqn_quad_boud1}
\end{align}
where the constant $C>0$ depends on $K$, $\gamma$ and $p$.
\end{proposition}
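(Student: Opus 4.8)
The plan is to recognize the left-hand side of \eqref{eqn_quad_boud1} as a moment of a randomly shifted lattice quadrature error and to reduce the general $p$-th moment to the second moment, which is governed by the standard shift-averaged worst-case error bound together with Proposition~\ref{prop:dom_norm}.

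First I would rewrite the ratio as a quadrature error. Since $\mu$ is uniform, $\pi^K(y)=\int_{[0,1]^K}\pi^K(y\mid x)\,\rd x$, and because $\pi^K$ and $Z_K$ differ only by the universal Gaussian constant, the pointwise identity $\pi^K(y\mid x)/\pi^K(y)=\exp(-\Phi_K(x,y))/Z_K(y)=\rho_K(x\mid y)$ holds for $x\in[0,1]^K$. Consequently,
\[
\frac{\pi^K_M(y)}{\pi^K(y)}-1=\frac1M\sum_{m=1}^M \rho_K(X_m\mid y)-\int_{[0,1]^K}\rho_K(x\mid y)\,\rd x,
\]
i.e.~exactly the error of the randomly shifted lattice rule \eqref{eq:rand-lattice-def} applied to the posterior density $\rho_K(\,\cdot\mid y)$, which integrates to one over $[0,1]^K$.

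Next I would decouple the power $p$ from the second-moment estimate. Writing $S(y)=\|\rho_K(\,\cdot\mid y)\|_{\infty}$ for the essential supremum over $[0,1]^K$, nonnegativity of the summands together with $\int_{[0,1]^K}\rho_K(x\mid y)\,\rd x=1$ gives $0\le \tfrac1M\sum_m \rho_K(X_m\mid y)\le S(y)$ and $S(y)\ge 1$, whence the quadrature error is bounded in absolute value by $S(y)$ pointwise in the shift. Therefore $|\,\cdot\,|^p\le S(y)^{p-2}|\,\cdot\,|^2$, and after taking $\E^\Delta$ and invoking the classical shift-averaged worst-case error bound for the CBC-constructed lattice rule \cite[Algorithm~7]{K2003} in $W^{1,2}_{\rm mix}([0,1]^K)$, namely $\E^\Delta|Q^\Delta_M(f)-I(f)|^2\le C_{K,\gamma}M^{-2+\gamma}\norm{f}_{W^{1,2}_{\rm mix}([0,1]^K)}^2$ for any $\gamma>0$, I obtain
\[
\E^\Delta\Big|\tfrac{\pi^K_M(y)}{\pi^K(y)}-1\Big|^p\le \frac{C_{K,\gamma}}{M^{2-\gamma}}\,S(y)^{p-2}\,\norm{\rho_K(\,\cdot\mid y)}_{W^{1,2}_{\rm mix}([0,1]^K)}^2 .
\]
It then remains to take $\E^{\pi^K}$ and show that the resulting $y$-integral is finite.

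The crux, and the main obstacle, is controlling the weight $S(y)^{p-2}$ against Proposition~\ref{prop:dom_norm}. The naive bound $S(y)\le 1/Z_K(y)\le C\exp(\kappa\norm{y}_\Gamma^2)$ with $\kappa>1/2$ is too crude, since $S(y)^{p-2}$ would then carry an exponential weight growing faster than the admissible $\exp(b\norm{y}_\Gamma^2)$ for large $p$. The key refinement is to combine the numerator decay in \eqref{eq:QMC_PhiK_bound}, $\sup_{x}\exp(-\Phi_K(x,y))\le C_{K,\tau}\exp(-\tfrac{1-\tau}{2}\norm{y}_\Gamma^2)$, with the lower bound $Z_K(y)\ge C'\exp(-\kappa\norm{y}_\Gamma^2)$ from \eqref{eq:Z_equiv} (valid by Lemma~\ref{lem:QMC_uniform_lemma}); this yields $S(y)\le C_\epsilon\exp(\epsilon\norm{y}_\Gamma^2)$ for \emph{any} $\epsilon>0$, by letting $\kappa\downarrow 1/2$ and $\tau\downarrow 0$. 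Choosing $\epsilon=b/(p-2)$ (the case $p=2$ being immediate) makes $S(y)^{p-2}\le C\exp(b\norm{y}_\Gamma^2)$, so that
\[
\E^{\pi^K}\!\Big[S(Y)^{p-2}\norm{\rho_K(\,\cdot\mid Y)}_{W^{1,2}_{\rm mix}([0,1]^K)}^2\Big]\le C\,\E^{\pi^K}\!\Big[\exp(b\norm{Y}_\Gamma^2)\norm{\rho_K(\,\cdot\mid Y)}_{W^{1,2}_{\rm mix}([0,1]^K)}^2\Big]<\infty
\]
by Proposition~\ref{prop:dom_norm}. Exchanging $\E^\Delta$ and $\E^{\pi^K}$ by Tonelli's theorem (all integrands being nonnegative) then gives \eqref{eqn_quad_boud1}, with the constant absorbing the dependence on $K$, $\gamma$ and $p$.
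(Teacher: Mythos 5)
Your proposal is correct and follows essentially the same route as the paper's proof: both identify the ratio as a randomly shifted lattice quadrature error for the posterior density $\rho_K(\,\cdot\mid y)$, reduce the $p$-th moment to the second moment via a pointwise sup-norm bound growing like $\exp(\epsilon\norm{y}_\Gamma^2)$ with $\epsilon>0$ arbitrarily small, apply the CBC shift-averaged worst-case error bound, and close with Proposition~\ref{prop:dom_norm} after tuning $\epsilon$ so that $(p-2)\epsilon\le b$. The only differences are cosmetic (you bound the error by $S(y)$ rather than $2\norm{\rho_K(\,\cdot\mid y)}_{L^\infty}$ and make the Tonelli step explicit).
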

\begin{proof}
To begin with, note that $\pi^K _M(y)$ is a randomized cubature rule for evaluating the $\mathcal{G}_K$-induced evidence,
and thus
\[
Q_M^\Delta(\rho_K(\, \cdot \, \mid y)) := \frac{\pi^K_M(y)}{\pi^K(y)} = \frac{1}{M} \sum_{m=1}^M \frac{\pi^K(y \mid X_m)}{\pi^K(y)} 
=
\frac{1}{M} \sum_{m=1}^M \rho_K( X_m  \mid y)
\]
is in turn a randomized cubature approximation for the integral of $\rho_K( \, \cdot \,  \mid y)$ over $[0,1]^{K}$, which evaluates to 1. Hence,
\begin{equation}
\label{eq:QMCquadrature}
\E^{\Delta} \E^{\pi^K}\Big|\frac{\pi^K _M(Y)}{\pi^K(Y)}-1\Big|^p 
 =  \E^{\Delta} \E^{\pi^K}\bigg|Q_M^\Delta(\rho_K(\, \cdot \, \mid Y)) -  \int_{[0,1]^{K}} \rho_K(x \mid Y) \, \rd x \bigg|^p, \qquad p \geq 2,
\end{equation}
and our aim is to prove the claim by providing a suitable estimate for the right-hand side of \eqref{eq:QMCquadrature}.

By virtue of \eqref{eq:QMC_PhiK_bound} and \eqref{eq:Z_equiv} with $\tau = \tau' > 0$ and $\kappa = \kappa' > 1/2$, 
\begin{equation*}
    0 \leq \rho_K(x \mid y) = \frac{1}{Z_K(y)} \exp(-\, \Phi_K(x,y)) \leq C_{b'} \exp \!\big( b' \norm{y}_\Gamma^2 \big), \qquad x \in [0,1]^{K},
\end{equation*}
where the constant $C_{b'}$ depends on $b' := \kappa' - (1-\tau')/2 > 0$ that can be chosen to be arbitrarily small. Since $\rho_K(\, \cdot \, \mid y)$ is continuous,
\begin{equation}
    \label{eq:QMC_error_aux2}
    \bigg|Q_M^\Delta(\rho_K(\, \cdot \, \mid y)) -  \int_{[0,1]^{K}} \rho_K(x \mid y) \, \rd x \bigg|
    \leq 2 \norm{\rho_K( \, \cdot \, \mid y)}_{L^\infty([0,1]^{K})} \leq C_{b'}' \exp(b'\norm{y}_\Gamma^2),
\end{equation}
which holds uniformly with respect to $\Delta$.

Using a generating vector $z$ constructed by the component-by-component algorithm \cite[Algorithm~7 and Theorem~8]{K2003} and resorting to~\cite[Theorem~3.2]{SKJ2002}, we get
\begin{equation}
\label{eq:QMC_error_aux3}
\E^{\Delta} \bigg|Q_M^\Delta(\rho_K(\, \cdot \, \mid y)) -  \int_{[0,1]^{K}} \rho_K(x \mid y) \, \rd x \bigg|^2
\le
C_{K,\gamma}\frac{\|\rho_K(\, \cdot \,  \mid y)\|_{ W^{1,2}_{\text{mix}}([0,1]^{K})}}{M^{2-\gamma}},
\end{equation}
where $\gamma > 0$.
Combining
\eqref{eq:QMCquadrature}, \eqref{eq:QMC_error_aux2} and \eqref{eq:QMC_error_aux3}  yields
\begin{align}
    \label{eq:QMC_error_aux1}
\E^{\Delta} \E^{\pi^K}\Big|\frac{\pi^K _M(Y)}{\pi^K(Y)}-1\Big|^p \!
& \leq  C''_{b'} \, \E^{\pi^K} \!   \Bigg[\! \exp \! \big((p-2) b'\norm{Y}_\Gamma^2\big) \, \E^\Delta \bigg|Q_M^\Delta(\rho_K(\, \cdot \, \mid Y)) -  \int_{[0,1]^{K}} \rho_K(x \mid Y) \, \rd x \bigg|^2\Bigg]\nonumber\\[1mm]
& \le   \frac{C_{K,\gamma, b'}}{M^{2-\gamma}}\E^{\pi^K} \!\!\left[ \exp \! \big((p-2) b'\norm{Y}_\Gamma^2\big) \, \|\rho_K(\, \cdot \,  \mid Y)\|_{W^{1,2}_{\text{mix}}([0,1]^{K})}^2\right] .
\end{align}
The assertion finally follows from  Proposition \ref{prop:E1-QMC-uniform} by choosing a small enough $b' > 0$ such that $(p-2) b' \leq b$.
\end{proof}

Now, we are ready to complete the analysis for the uniform prior by proving the convergence rate for the randomized QMC-based surrogate.

\begin{theorem}
\label{thm:QMC-uniform-result}
Assume the projection of $\mu$ to $\calX_K$ is the uniform measure over $[0,1]^{K}$, the corresponding approximative forward operator satisfies $\calG_K \in W^{1,\infty}_{\rm mix}([0,1]^{K})^d$, $\delta_K$ given in \eqref{eq:def_deltaK} is bounded, and $\mathcal{G}$ satisfies Assumption~\ref{ass:prior_perturbation} with the original (i.e., non-projected) prior $\mu$.
Let $\{X_m\}_{m=1}^M$ be the randomized lattice points defined in \eqref{eq:rand-lattice-def} with the generating vector $z$ constructed by the component-by-component algorithm \cite[Algorithm~7]{K2003}, and let $\pi^K_M$ be as defined in \eqref{eq:piM_QMC}. Then,
\begin{equation}
\label{eq:QMC_rate}
\E^\Delta \E^{\otimes \pi_M^K}\big| J - \JMN^K \big|^2
    = C \bigg(\delta_K^2 + \frac{1}{M^{2-\gamma}} + \frac{1}{N} \bigg),
    \end{equation}
    where the constant $C > 0$ depends on $K$ and $\gamma$. 
\end{theorem}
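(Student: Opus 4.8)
The plan is to follow the template of Theorem~\ref{thm:MC-GMM-estimator}, replacing the Monte Carlo rate in $M$ by the faster quasi-Monte Carlo rate furnished by Proposition~\ref{prop:E1-QMC-uniform}. Starting from the pointwise-in-$\Delta$ decomposition \eqref{eq:GMM_thm_aux1}, which splits $\E^{\otimes \pi_M^K}|J-\JMN^K|^2$ into a modeling term $2(\ent(\pi)-\ent(\pi^K))^2$, a surrogate term $2(\ent(\pi^K)-\ent(\pi_M^K))^2$, and the variance term $\tfrac1N \V_{\pi_M^K}(\log \pi_M^K(Y))$, I would take the expectation $\E^\Delta$ over the random shift and bound each piece separately. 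The essential observation is that $\pi^K$ is a fixed (non-random) density, so randomness enters only through $\pi_M^K$.

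The modeling term is handled exactly as in Theorem~\ref{thm:MC-GMM-estimator}: since $\pi^K$ is deterministic, so is this term, and combining $D_{\rm KL}(\pi^K,\pi)\le \tfrac12\delta_K^2$ from \eqref{eq:evidence_KLdiff_upper_bound} with the KL-based entropy bound \eqref{eq:diff-entr-KL-alt-ineq} yields $(\ent(\pi)-\ent(\pi^K))^2\le C\delta_K^2$. Here the finiteness of the relevant $\log^2$ moments is guaranteed by \eqref{eq:QMC_log2_bound} together with the two-sided tail estimates of Lemma~\ref{lem:QMC_uniform_lemma} and the hypothesis that $\mathcal G$ satisfies Assumption~\ref{ass:prior_perturbation} with the original prior (the cross-moment $\E^{\pi^K}\log^2\pi(Y)$ being controlled as in Corollary~\ref{cor:finite_squared_log} by bounding $\log^2\pi$ through the Gaussian lower tail of $\pi$ and integrating against the Gaussian upper tail of $\pi^K$). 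For the variance term I would use $\V_{\pi_M^K}(\log\pi_M^K(Y))\le \E^{\pi_M^K}\log^2\pi_M^K(Y)$ and invoke \eqref{eq:QMC_log2_bound} to obtain $\E^\Delta \tfrac1N\V_{\pi_M^K}(\log\pi_M^K(Y))\le C/N$.

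The main step is the surrogate term, where the QMC rate enters. Applying the $\chi^2$-based entropy bound \eqref{eq:diff-entr-chi2-ineq} with $\pi_1=\pi_M^K$ and $\pi_2=\pi^K$ gives $|\ent(\pi^K)-\ent(\pi_M^K)|\le \sqrt{\E^{\pi^K}\log^2\pi^K(Y)}\,\sqrt{\chi^2(\pi_M^K,\pi^K)}+\chi^2(\pi_M^K,\pi^K)$; squaring and taking $\E^\Delta$ produces two contributions, $\E^\Delta\chi^2(\pi_M^K,\pi^K)$ and $\E^\Delta(\chi^2(\pi_M^K,\pi^K))^2$. The first is bounded directly by Proposition~\ref{prop:E1-QMC-uniform} with $p=2$. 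For the second I would first apply Jensen's inequality in the form $(\E^{\pi^K}|W|^2)^2\le \E^{\pi^K}|W|^4$ with $W=\pi_M^K/\pi^K-1$, reducing it to $\E^\Delta\E^{\pi^K}|\pi_M^K/\pi^K-1|^4$, which Proposition~\ref{prop:E1-QMC-uniform} with $p=4$ controls. Both contributions are therefore $O(M^{-(2-\gamma)})$, and since $\E^{\pi^K}\log^2\pi^K(Y)<\infty$ by \eqref{eq:QMC_log2_bound}, the surrogate term is $O(M^{-(2-\gamma)})$.

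Summing the three estimates yields \eqref{eq:QMC_rate}. The only genuinely delicate point, beyond routine bookkeeping, is ensuring that the moment $\E^{\pi^K}\log^2\pi^K(Y)$ and the fourth-moment hypothesis of Proposition~\ref{prop:E1-QMC-uniform} are simultaneously available — that is, that the QMC error bound in the $\chi^2$-distance can be upgraded to $p=4$ without losing the rate. This is precisely what the statement of Proposition~\ref{prop:E1-QMC-uniform} for general $p\ge 2$ is designed to supply, the uniform exponential control \eqref{eq:QMC_PhiK_bound} of $\rho_K(\,\cdot\,\mid y)$ on the bounded domain $[0,1]^K$ being what permits passing from the second-moment QMC estimate to arbitrary polynomial moments at the same order in $M$.
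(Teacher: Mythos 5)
Your proposal is correct and follows essentially the same route as the paper: the same three-term decomposition as in Theorem~\ref{thm:MC-GMM-estimator}, with the modeling term handled via \eqref{eq:evidence_KLdiff_upper_bound} and \eqref{eq:diff-entr-KL-alt-ineq}, the variance term via \eqref{eq:QMC_log2_bound}, and the surrogate term via \eqref{eq:diff-entr-chi2-ineq} combined with Proposition~\ref{prop:E1-QMC-uniform}. Your explicit remark that the $\left(\chi^2\right)^2$ contribution requires invoking Proposition~\ref{prop:E1-QMC-uniform} with $p=4$ (via Jensen) merely spells out a detail the paper leaves implicit in its citation of \eqref{eqn_quad_boud1}.
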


\begin{proof}
The assertion follows from a similar line of reasoning as Theorem~\ref{thm:MC-GMM-estimator}. Indeed, following \eqref{eq:GMM_thm_aux1}, we decompose the expected total squared error into three parts:
\begin{align}
    \label{eq:GMM_thm_aux1_QMC}
    \E^\Delta\E^{\otimes \pi_M^K} &\big|  J -  \JMN^K\big|^2 \nonumber\\ & \leq 
      2 \left(\ent(\pi) - \ent(\pi^K)\right)^2 + 2 \, \E^\Delta \! \left(\ent(\pi^K) - \ent(\pi_M^K)\right)^2 + \frac{1}{N} \,   \E^\Delta \V_{\pi_M^K} \big(\log(\pi_M^K(Y)) \big)
    \nonumber\\
    & \leq C \delta_K^2 + \frac{C'}{M^{2-\gamma}}   +     \frac{C''}{N},
\end{align}
where the second step follows by applying \eqref{eq:diff-entr-KL-alt-ineq},
\eqref{eq:evidence_KLdiff_upper_bound} and Corollary~\ref{cor:finite_squared_log} to the first term on the right-hand side, \eqref{eq:diff-entr-chi2-ineq}, \eqref{eqn_quad_boud1} and \eqref{eq:QMC_log2_bound} to the second term, and \eqref{eq:QMC_log2_bound} to the third term.
\end{proof}

\begin{remark}[Higher-order QMC]\label{rem:ho-qmc}
    We have proven first order convergence of the surrogate evidence \eqref{eq:piM_QMC} toward $\pi^K$ with respect to the square root of the expected $\chi^2$-distance assuming the surrogate forward map $\mathcal{G}_K$ is regular enough; cf.~Proposition~\ref{prop:E1-QMC-uniform} with $p=2$. It would also be tempting to consider higher order convergence by other QMC rules, if suitable expected higher order smoothness of the posterior $\rho_K( \, \cdot \, \mid y)$ were guaranteed (cf. Proposition~\ref{prop:dom_norm}). For example, one could use tent-transformed lattice rules to achieve second order convergence~\cite{GSY2019}, or higher-order digital nets~\cite{GSY2018}. 
    In Section~\ref{sec:numerics}, we numerically demonstrate second order convergence by the tent-transformed lattice rule.
\end{remark}

\subsection{Gaussian prior}
\label{subsec:QMC/Gaussian_prior}
In this subsection, we suppose $\mu$ has white noise statistics on $\R^K \cong \mathcal{X}_K$ leading to the posterior
\[
\mu^y(\rd x) = \frac{1}{ Z_K(y)} \exp \! \big(-\Phi_K(x,y) \big)  \, \mu({\rm d}x) =: \sigma_K(x \mid y) \, \mu(x) \, {\rm d}x,
\]
where $\mu: \R^K \to \R_+$ denotes the standard Gaussian density. Take note that other types of Gaussian priors can also be presented in this form after a reparametrization based on a whitening/coloring transform.

To be able to present convergence rates, we define a function space with the norm
\begin{equation*}
   \|f\|_{W^{1,2}_{*}(\R^{K})}^2
:= 
\sum_{\fraku \subseteq \mathcal{K} }\int_{\R^{|\fraku|}}\bigg(\int_{\R^{d-|\fraku|}} \frac{\partial^{|\fraku|}}{\partial  x_{\fraku}} f\left( x_{\fraku} ; x_{-\fraku}\right) \prod_{j \in -\fraku} \mu \left(x_j\right) \rd  x_{-\fraku}\bigg)^2 \prod_{j \in \fraku} \psi^2\left(x_j\right) \, \rd x_{\fraku},
\end{equation*}
where $-\fraku = \mathcal{K} \setminus \fraku$ and the \emph{weight function} $\psi$ converges to zero slower than $\mu$ at infinity. For the precise assumptions on $\psi$, consult \cite[Eqs.~(9) and (10)]{KSWWW2010} and \cite[Table~1]{NK2014}. In our setting, one may choose, e.g.,  $\psi(x_j) \propto e^{-\alpha |x_j|}$ for some $\alpha>0$.  

\begin{proposition}
    \label{prop:E1-QMC-Gauss}
Assume that
\begin{equation}
\label{eq:C_sigma}
C_\sigma := \E^{\pi^K} \! \norm{\sigma_K(\, \cdot \,  \mid Y)}_{W^{1,2}_{*}(\R^{K})}^2  < \infty.
\end{equation}
Furthermore, let $\{X_m\}_{m=1}^M$ be the transformed randomized lattice points defined by \eqref{eq:rand-lattice-def2} with the generating vector z constructed by the
component-by-component algorithm \cite[Algorithm~6]{NK2014} and let $\pi^K_M$ be as in~\eqref{eq:piM_QMC}.
 Then, for any $\gamma>0$,
 \begin{align}
 \E^{\Delta}\chi^2 \big(\pi^{K} _M,\pi^K \big)
   \leq
 C \frac{C_\sigma}{M^{2-\gamma}},
\label{eqn_quad_boud2}
\end{align}
where the constant $C$ depends on $K$, $\gamma$ and $\psi$.
\end{proposition}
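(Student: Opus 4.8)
The plan is to mirror the argument for the uniform prior in Proposition~\ref{prop:E1-QMC-uniform} (with $p=2$), adapting it to the unbounded Gaussian setting where the lattice points are pushed through the inverse-CDF transform $\Psi_{\text{CDF}}^{-1}$. First I would observe, exactly as in the uniform case, that the ratio $\pi_M^K(y)/\pi^K(y)$ is precisely a randomized QMC cubature approximation of the integral of the posterior density $\sigma_K(\,\cdot\,\mid y)$ against the Gaussian weight $\mu$ over $\R^K$, which evaluates to $1$. Concretely, writing $X_m = \Psi_{\text{CDF}}^{-1}(X_{\Delta,m})$, we have
\begin{equation*}
\frac{\pi_M^K(y)}{\pi^K(y)} = \frac{1}{M}\sum_{m=1}^M \frac{\pi^K(y\mid X_m)}{\pi^K(y)} = \frac{1}{M}\sum_{m=1}^M \sigma_K(X_m\mid y),
\end{equation*}
so that
\begin{equation*}
\chi^2\big(\pi_M^K,\pi^K\big) = \E^{\pi^K}\Big|\tfrac{1}{M}\sum_{m=1}^M \sigma_K(X_m\mid Y) - \textstyle\int_{\R^K}\sigma_K(x\mid Y)\,\mu(x)\,\rd x\Big|^2,
\end{equation*}
which is the squared QMC quadrature error for integrating $\sigma_K(\,\cdot\,\mid Y)$ against the Gaussian measure.

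Next I would invoke the QMC error bound for randomly shifted lattice rules on unbounded domains with weight function $\psi$. This is where the space $W^{1,2}_*(\R^K)$ enters: the relevant theorem (the Gaussian-weighted analogue of \cite[Theorem~3.2]{SKJ2002}, established in \cite{NK2014,KSWWW2010}) states that, for a generating vector produced by the component-by-component algorithm \cite[Algorithm~6]{NK2014}, the shift-averaged squared error satisfies
\begin{equation*}
\E^\Delta\Big|\tfrac{1}{M}\sum_{m=1}^M \sigma_K(X_m\mid y) - \textstyle\int_{\R^K}\sigma_K(x\mid y)\,\mu(x)\,\rd x\Big|^2 \leq \frac{C_{K,\gamma}}{M^{2-\gamma}}\,\|\sigma_K(\,\cdot\,\mid y)\|_{W^{1,2}_*(\R^K)}^2
\end{equation*}
for every $\gamma>0$. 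Taking the $\E^{\pi^K}$-expectation over $Y$ and swapping the two expectations by Tonelli (the integrand is nonnegative) then yields
\begin{equation*}
\E^\Delta \chi^2\big(\pi_M^K,\pi^K\big) \leq \frac{C_{K,\gamma}}{M^{2-\gamma}}\,\E^{\pi^K}\|\sigma_K(\,\cdot\,\mid Y)\|_{W^{1,2}_*(\R^K)}^2 = \frac{C_{K,\gamma}}{M^{2-\gamma}}\,C_\sigma,
\end{equation*}
which is exactly the claimed bound \eqref{eqn_quad_boud2}. This is the cleaner route because, unlike in the uniform case where an $L^\infty$ bound on $\rho_K(\,\cdot\,\mid y)$ had to be pulled out to handle general $p>2$, here the statement only asks for the $p=2$ ($\chi^2$) bound, so no auxiliary Hölder splitting of the $Y$-integrand is needed.

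The main obstacle is verifying that the weighted QMC theory is genuinely applicable, i.e.~checking the hypotheses of the lattice-rule error bound rather than the algebra of plugging it in. The critical points are: (i) the worst-case error bound of \cite{NK2014,KSWWW2010} requires that $\sigma_K(\,\cdot\,\mid y)$, as a function of $x$, lie in the weighted space $W^{1,2}_*(\R^K)$ with mixed first derivatives square-integrable against $\prod_j\psi^2(x_j)$ — the assumption \eqref{eq:C_sigma} is precisely tailored to guarantee this in an $\E^{\pi^K}$-averaged sense, so I would be careful to invoke the bound \emph{pointwise in $y$} and only then integrate; and (ii) the matching of the weight function $\psi$ to the Gaussian density $\mu$ must satisfy the decay and integrability conditions of \cite[Eqs.~(9)--(10)]{KSWWW2010} and \cite[Table~1]{NK2014}, which is why $\psi$ is taken to decay slower than $\mu$ (e.g.~$\psi(x_j)\propto e^{-\alpha|x_j|}$). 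Provided those conditions are satisfied — which is assumed in the setup — the proof is a direct transcription of the uniform-prior argument with \cite[Theorem~3.2]{SKJ2002} replaced by its Gaussian-weighted counterpart. Unlike Proposition~\ref{prop:dom_norm}, which \emph{derived} the finiteness of the analogous norm integral from regularity of $\calG_K$, here the finiteness $C_\sigma<\infty$ is taken as a hypothesis, so no separate computation bounding the derivatives of $\sigma_K$ is required within this proposition.
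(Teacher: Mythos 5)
Your proposal is correct and follows essentially the same route as the paper: identify $\pi_M^K(y)/\pi^K(y)$ as a randomized lattice-rule approximation of $\int_{\R^K}\sigma_K(x\mid y)\,\mu(\rd x)=1$, apply the shift-averaged worst-case error bound for the weighted space $W^{1,2}_*(\R^K)$ from \cite[Theorem~8]{NK2014} pointwise in $y$, and then take the $\E^{\pi^K}$-expectation to obtain the bound with $C_\sigma$. The paper's proof is exactly this argument, so no further comparison is needed.
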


\begin{proof}
    The general argument is exactly the same as in the proof of Theorem~\ref{prop:E1-QMC-uniform}.
    Indeed, denoting
    \begin{equation*}
    Q_M^\Delta(\sigma_K(\,\cdot \, \mid  y)) := \frac{\pi^K_M(y)}{\pi^K(y)} = \frac{1}{M} \sum_{m=1}^M \frac{\pi^K(y \mid X_m)}{\pi^K(y)},
    \end{equation*}
    the error bound
    \[
    \E^{\Delta}\Big|\frac{\pi^{K}_M(y)}{\pi^K(y)}-1\Big|^2
    =
    \E^{\Delta}\bigg| Q_M ^{\Delta}(\sigma_K(\, \cdot \, \mid y)) - \int_{\R^{K}} \sigma_K(x \mid y) \, \mu(\rd x) \bigg|^2
    \le
    C_{K,\gamma,\psi} \frac{\|\sigma_K(\, \cdot \,  \mid y)\|^2_{W^{1,2}_*(\R^{K})}}{M^{2-\gamma}}
    \]
    follows from \cite[Theorem~8]{NK2014}. Thus,
    \[
    \E^{\Delta}\chi^2(\pi^{K} _M,\pi^K) = \E^{\pi^K} \E^{\Delta}\Big|\frac{\pi^{K}_M(Y)}{\pi^K(Y)}-1\Big|^2 \leq  C_{K,\gamma,\psi} \frac{C_\sigma}{M^{2-\gamma}},
    \]
    which completes the proof.
\end{proof}
Take note that Proposition~\ref{prop:E1-QMC-Gauss} provides the main tool for estimating the second term on the right-hand side of \eqref{eq:GMM_thm_aux1_QMC} in the Gaussian case. Assuming that all terms in \eqref{eq:GMM_thm_aux1_QMC} could be estimated in an analogous manner in the Gaussian case as for the uniform prior (under appropriate assumptions on $\mathcal{G}$ and $\mathcal{G}_K$), one would thus expect to arrive at a bound of the form
\begin{equation}
\label{eq:QMC-Gauss}
\E^\Delta\E^{\otimes \pi_M^K}\big| J - \JMN^K \big|^2
    = \mathcal{O}\bigg(\delta_K^2 +  \frac{1}{M^{2-\gamma}} + \frac{1}{N} \bigg).
    \end{equation}
We do not prove \eqref{eq:QMC-Gauss} but only numerically validate its hypothesized convergence rate in $M$ in one of the numerical tests of Section~\ref{sec:numerics}.

\section{Numerical experiments}\label{sec:numerics}

 In this section, we present two numerical experiments to demonstrate our method for estimating differential entropy. The procedure for computing a single realization of the estimator~\eqref{eq:def-estimator} is described in Algorithm~\ref{alm_gmm}. 
 The first numerical example is a linear problem with a Gaussian prior, with a known analytic form for the differential entropy of the associated evidence distribution. The second example considers a nonlinear PDE-based model with a high-dimensional uniform prior. Both experiments verify the presented convergence rates even with relatively small sample sizes in $M$. 

\begin{algorithm}[htbp]
 	\caption{A realization of the estimator \eqref{eq:def-estimator} by MC or randomized QMC}\label{alm_gmm}
 	\begin{algorithmic}[1]
    \State {For randomized QMC, draw $\Delta$ from $U([0,1]^{K})$};
 		 \For{$m=1,\dots,M$} 
 		\State {Generate $x_m$ by drawing from the prior (MC) or via \eqref{eq:rand-lattice-def} or \eqref{eq:rand-lattice-def2} with the random shift $\Delta$ (QMC);}
 		\State {Evaluate the forward map: $z_m = \mathcal{G}_K(x_m)$;}

 		\EndFor
    \State Construct the GMM surrogate 
    $$
    \pi^K_{M}(\, \cdot \, )=\frac{1}{\sqrt{(2 \pi)^d | \Gamma |}} \, \frac{1}{M}\sum_{m=1}^{M} \exp \! \Big( -\frac{1}{2}\| z_m - \, \cdot \, \|_\Gamma^2 \Big);
    $$

    \State Set $\vartheta = 0$;
 		\For{$n=1,\dots, N$} 
 		\State Draw an integer $m^*$ from the uniform distribution over $\{1,\ldots, M\}$; 
 			\State Draw $y_n$ from $\mathcal{N}(z_{m^*},\Gamma)$;  \State Set $\vartheta = \vartheta - \log(\pi_M^K(y_n))$;
 		\EndFor\\
   \Return $\frac{\vartheta}{N}$;
 	\end{algorithmic}
 \end{algorithm}
\subsection{Deconvolution}

First, we consider a linear $\calG$ that originates from a weighted (de)convolution problem; cf.,~e.g.,~\cite[Example~9.3]{CalvettiSomersalo2023}. To be precise, we set
\begin{equation*}
   (\calG x)(t)=(g*x) (t)=\int^{1}_{0} g(t-\tau) \, x(\tau)\, w(\tau) \,  \rd\tau, \qquad t \in [0,1],
\end{equation*}
 where $x:[0,1]\to\R$ is the original signal that is to be recovered in the underlying inverse problem, and the Gaussian convolution kernel $g$ and the weight $w$ are defined, respectively, by
 \begin{eqnarray}
     g(t)=\frac{1}{\sqrt{2\pi}\, \gamma}\exp \Big(-\frac{t^2}{2\gamma^2} \Big) \quad \text{and} \quad w(t) = (1-t )^4. \label{eqn:gauss_kernel}
 \end{eqnarray}
 
Consider the grid points $t_k = (k-1)/(K-1)$, $k=1, \dots, K$, and let us introduce the surrogate (or discretized) forward map $\calG_K: \R^{K} \to \R^{K}$ via
\begin{eqnarray*}
    (\calG_K x)_j = \sum_{k=1}^{K} \frac{1}{K-1} g(t_j - t_k) \, (1- t_k)^4 \,  x_k,  \qquad  j=1,\ldots,K,
\end{eqnarray*}
where we have abused the notation by redefining $x$ to be a vector with components $x_k = x(t_k)$, $k=1, \dots, K$. Assuming an additive Gaussian measurement noise $\epsilon$, we arrive at a linear system
 \begin{equation}
 \label{eq:A_linear}
    y=Ax+\epsilon,
\end{equation}
where $y\in \R^{K}$ is the measurement, $x\in \R^{K}$ is the unknown, and we have identified the discretized forward operator with a matrix $A\in \R^{ K\times K}$ given componentwise as
\[
A_{jk}=\frac{1}{K-1}\;g(t_j- t_k) \, (1-t_k)^4, \qquad j, k = 1, \dots, K.
\]

We assume the prior and noise are mutually independent zero-mean Gaussians with diagonal covariance matrices $\Sigma=\sigma_{x}^2 I $ and $\Gamma =\sigma_{\epsilon}^2 I $, respectively. In particular, it follows that the differential entropy of the evidence distribution $\pi^K$ for the model \eqref{eq:A_linear} has the analytic form
\begin{equation}
\label{eq:JK}
     J^K:=\frac{K}{2} \left( 1+\log(2\pi) \right)+\frac{1}{2}\log |A\Sigma A^{\top}+\Gamma|.
\end{equation}
 We aim to estimate $J^K$ in what follows, that is, unlike in Theorem~\ref{thm:MC-GMM-estimator} and \eqref{eq:QMC-Gauss}, we ignore the discrepancy between $\mathcal{G}$ and $\mathcal{G}_K$.  Due to the representation~\eqref{eq:JK}, we can compute the error exactly for each individual realization of our estimators. Moreover, we are only interested in the convergence rate with respect to $M$ since it corresponds to the number of forward operator evaluations. Both randomized QMC and MC are used for constructing the GMM in Algorithm~\ref{alm_gmm}. Note that the considered finite-dimensional model satisfies the conditions of Assumption~\ref{ass:prior_perturbation} for any $L_1$ and $L_2$ that satisfy
\begin{equation}
\label{eq:L1_and_L2}
L_1 \geq \frac{\| A \|_2}{\sigma_\epsilon} \quad \text{and} \quad 0 < L_2 < \frac{1}{2 \sigma_x^2},
\end{equation}
where $\| A \|_2$ is the operator norm with respect to the Euclidean vector norm,~i.e.,~the largest singular value of $A$.

Our parameter choices are as follows: the dimension of the problem is $d=K=20$, and the prior and noise standard deviations are set to $\sigma_x=10$ and $\sigma_\epsilon=2$, respectively.
The free parameter in the Gaussian kernel (\ref{eqn:gauss_kernel}) is $\gamma=0.1$.  It can be numerically verified that the condition $L_1^2 < \frac{1}{12} L_2$ {\em cannot} be satisfied with these choices (cf.~\eqref{eq:L1_and_L2}), which means that there is no guarantee that the convergence rate predicted by \eqref{eq:theorem2} in Theorem~\ref{thm:MC-GMM-estimator} is achievable (without $\delta_K$ since we do not consider the discrepancy between the exact and discretized models). Recall that we did not deduce in Section~\ref{subsec:QMC/Gaussian_prior} precise conditions that guarantee the convergence rate in \eqref{eq:QMC-Gauss}, even though we also aim to verify that rate numerically in the following.

As we are interested in verifying convergence rates in $M$, we choose large enough $N$ to make sure that the $M$-dependent terms dominate in \eqref{eq:theorem2} and \eqref{eq:QMC-Gauss} -- even if the hidden constants associated with the $N$-dependent terms are considerably larger. Figure~\ref{fig:linear2} shows the convergence of the RMSE for the MC and randomized QMC differential entropy estimators with $30$ realizations; as the generating vector $z$ for randomized QMC we employ  \texttt{lattice-32001-1024-1048576.3600} from \cite{FrancesWeb}. To be more precise, for MC the quantity that is plotted with a solid line as a function of $M$ is the right-hand side of the approximate equality
\begin{equation}
\label{eq:RMSE}
 \sqrt{\E^{\otimes \mu} \E^{\otimes \pi_M^K}\big(J^K - \JMN^K\big)^2}
 \approx
 \sqrt{\sum_{p=1}^{30}\frac{1}{30}\big(J^K-\JMN^{K,p}\big)^2} \, ,
\end{equation}
where $\{\JMN^{K,p}\}_{p=1}^{30}$ are independent realizations of the MC estimator $\JMN^K$. For the QMC variant, the outer expectation on the left-hand side is taken over the random shift $\Delta$ in \eqref{eq:rand-lattice-def}, and the independent random realizations of the estimator on the right-hand side are drawn accordingly. Note that via the standard bias-variance decomposition,
\[
\E^\eta \, \E^{\otimes \pi_M^K}\big( J^K- \JMN^K\big)^2
=
\big( J^K - \E^{\eta} \, \E^{\otimes \pi_M^K} \JMN^K\big)^2
+
\E^{\eta} \, \E^{\otimes \pi_M^K} \big (\JMN^K-\E^{\eta} \E^{\otimes \pi_M^K} \JMN^K \big)^2,
\]
where $\eta = \otimes \mu$ for the MC estimator and $\eta = \Delta$ for the QMC estimator. The second term on the right-hand side, i.e.~the variance, can be approximated by the sample variance over the different realizations of the estimator. To illustrate the behavior of this quantity, Figure~\ref{fig:linear2} also depicts the square root of the sample variance, i.e.~the standard deviation, as a function of $M$ over the different realizations employed in~\eqref{eq:RMSE}.

\begin{figure}[!h]
\begin{center}
\includegraphics[scale=0.8]{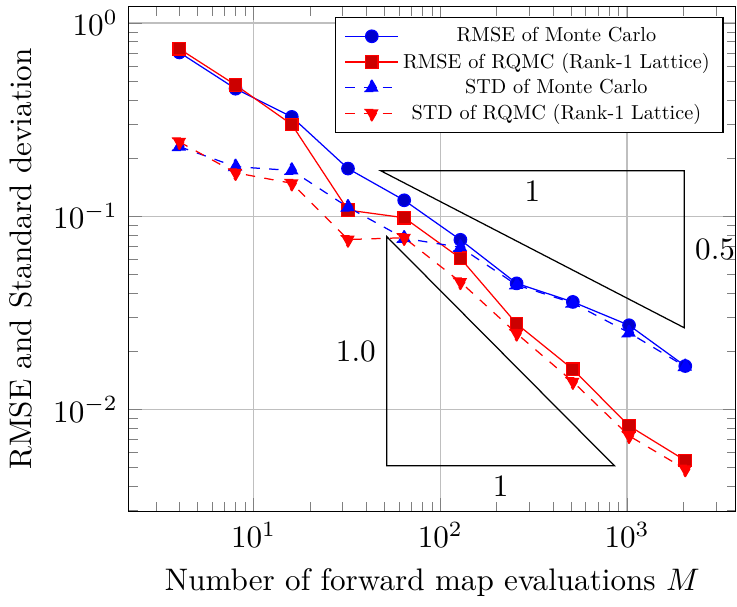}
	\caption{The RMSEs and standard deviations as functions of $M$ for the MC and randomized QMC estimators of the differential entropy $J^K$ given in \eqref{eq:JK} for the linear model \eqref{eq:A_linear}. For both methods, we choose large enough $N$ so that the $M$-dependent terms  dominate in \eqref{eq:theorem2} and~\eqref{eq:QMC-Gauss}.
    }\label{fig:linear2}
\end{center}
\end{figure}

Figure~\ref{fig:linear2} verifies the convergence rates for the RMSE with respect to $M$ predicted by \eqref{eq:theorem2} and \eqref{eq:QMC-Gauss}, i.e., $\mathcal{O}(M^{-1/2})$ and $\mathcal{O}(M^{-1 + \gamma})$ for any $\gamma >0$, respectively. Based on numerical experiments not documented here, we also note that the RMSE for the MC-based estimator exhibits a convergence rate closer to ${\mathcal O}(M^{-1})$ for some linear problems with Gaussian prior and noise. This phenomenon may be due to the GMM's ability to provide accurate approximations for Gaussian distributions, which may seem to result in a ``too high'' convergence rate if the studied linear model is simple.

\subsection{Elliptic PDE with random diffusion coefficients}

We next consider a source problem for an elliptic PDE model, where the unknown is a diffusion coefficient and pointwise evaluations of the solution field serve as the measurements. The exactly same model was considered in \cite{kaarnioja2024quasimontecarlobayesiandesign}, and it can,~e.g.,~describe Darcy's flow of fluid within a porous medium.  

To be more precise, we consider the following elliptic PDE problem over the two-dimensional square $D=(0,1)^2$: 
\begin{align}
\label{eqn:pde_model_simple_source_fcn}
\begin{cases}
    -\nabla \cdot ( a(s,x)  \nabla u(s, x)) = 10s_1, & \ \ s\in D, \\[1mm]
    u(s, x)= 0, & \ \ s \in \partial \Omega,
    \end{cases}
\end{align}
where the (weak) derivatives are taken with respect to the spatial variable $s$ and the boundary value is to be understood in the sense of the appropriate Sobolev trace. The diffusion coefficient is defined via a Karhunen--Lo\`eve type expansion,
\begin{align}\label{eqn:pde_parameter_representation}
      a(s,x)  =1+0.1\sum_{j=1}^{K} j^{-2} \big(x_j - \tfrac{1}{2}) \sin(\pi j s_1) \sin(\pi j s_2) ,
\end{align}
where the domain for the unknown parameter $x$ is $[0,1]^K$, with $K = 100$. This can be interpreted as having $\R^{K}$ as the domain for the forward operator accompanied with a uniform prior supported on~$[0,1]^{K} \subset \R^{K}$.

Because
\[
\sum_{j=1}^{\infty} j^{-2} = \frac{\pi^2}{6},
\]
it is easy to check that 
\[
0.1 < a(s,x) < 0.9 \quad \text{for all } s \in D, \, x \in [0,1]^{K}. 
\]
As in addition $D$ is a convex polygon and both  $a(\, \cdot \, ,x)$ and the source term are in $C^\infty(\overline{D})$, the problem \eqref{eqn:pde_parameter_representation}  has a unique solution in $u(\, \cdot \,, x ) \in H^2(D)$ for any $x \in [0,1]^{K}$ due to standard theory for elliptic PDEs~\cite{Grisvard1985}. Since $H^2(D) \subset C(\overline{D})$ by virtue of the Sobolev embedding theorem, it is possible to define our measurements as point evaluations of the solution $u(\, \cdot \,, x )$. In fact, $u(\, \cdot \,, x )$ is smooth in the interior of $D$ for any $x \in [0,1]^{K}$ because of interior elliptic regularity. 

We define the nonlinear forward operator as 
\begin{equation}
\label{eq:G_PDE}
\mathcal{G}_K:
\left\{
\begin{array}{l}
\R^{K} \to \R^3, \\[1mm]
x \mapsto \big[u(\varsigma_j, x)\big]_{j=1}^3,
\end{array}
\right.
\end{equation}
where $\varsigma_1 = (0.25, 0.25)$, $\varsigma_2 = ( 0.25, 0.50)$ and $\varsigma_3 = (0.75, 0.50)$. 
These measurement points are visualized in Figure~\ref{fig:darcy-2d} together with the solution of \eqref{eqn:pde_model_simple_source_fcn} for one possible realization of $x$.
Although we do not aim to verify convergence rates in \eqref{eq:theorem2} and \eqref{eq:QMC_rate} for $\mathcal{G}_K$ but only for its discretized version introduced below, let us in any case briefly consider if $\calG_K$ satisfies the assumptions of Theorems~\ref{thm:MC-GMM-estimator} and \ref{thm:QMC-uniform-result}.
As the solution to \eqref{eqn:pde_model_simple_source_fcn} depends analytically on the diffusion coefficient $a(\, \cdot \, , x)$ in the topology of $L^\infty(D)$ (see \cite[Appendix~A]{Garde2020} for a proof in a closely related setting with explicit formulas for Fr\'echet derivatives of all orders) and the dependence of $a(\, \cdot \, , x)$ on $x$ is affine, it can be deduced that $\calG_K \in W^{1,\infty}_{\rm mix}([0,1]^{K})^3$ by resorting to elliptic regularity theory, i.e., $\calG_K$ satisfies the assumptions of  Theorem~\ref{thm:QMC-uniform-result}. Moreover, according to Lemma~\ref{lem:QMC_uniform_lemma}, the condition $\calG_{K} \in W^{1,\infty}_{\rm mix}([0,1]^{K})^3$ is enough to guarantee that Assumption~\ref{ass:prior_perturbation} is satisfied with some $L_1$ and $L_3$ and any $L_2 > 0$, and thus the conditions of Theorem~\ref{thm:MC-GMM-estimator} are also valid.

 The domain $D=(0,1)^2$ is discretized into a regular finite element (FE) mesh with $8192$ triangles and $4225$ nodes. For any given $x \in [0,1]^{K}$, a numerical solution to \eqref{eqn:pde_model_simple_source_fcn} is computed by the finite element method with piecewise linear basis functions. The discretized forward operator is defined by replacing the solution of \eqref{eqn:pde_model_simple_source_fcn} in \eqref{eq:G_PDE} by its FE approximation; we abuse the notation by also denoting this discretized forward operator by $\mathcal{G}_K$. Take note that evaluating an FE solution at the measurement points is straightforward as they coincide with certain nodes of the FE mesh. Even though analyzing the discretization error would be possible, we do not stress this matter any further and simply apply Algorithm~\ref{alm_gmm} to approximating the differential entropy of the evidence distribution induced by the {\em discretized} forward operator~$\mathcal{G}_{K}$. The studied forward model is 
 \[
 y = \mathcal{G}_{K}(x) + \epsilon, 
 \]
where $\epsilon$ is zero-mean Gaussian noise with diagonal covariance  $\Gamma_{\epsilon}=\sigma_{\epsilon}^2 I$, where $\sigma^2_\epsilon=0.1$. In this problem setting, the analytic form of the entropy is not available, and hence we compute the reference solution using a larger sample size.

\begin{figure}[!ht]
    \centering
\includegraphics[scale=0.6,trim={1.8cm 7.0cm 2.2cm 7cm},clip]{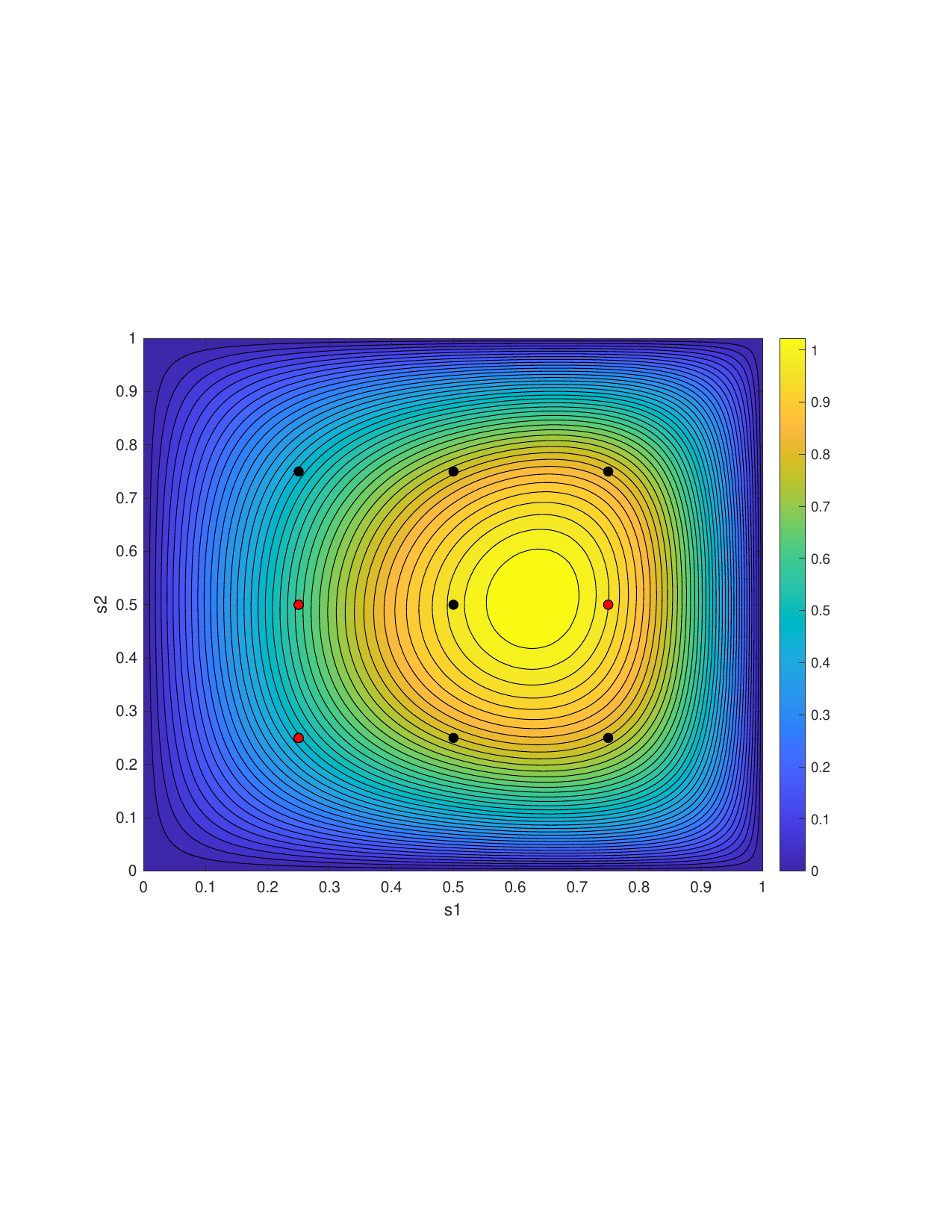} 
\caption{The three observation points (red dots) on top of the solution to \eqref{eqn:pde_model_simple_source_fcn} with one possible realization of $x$. For comparison, the black dots depict the other measurement locations considered in~\cite{kaarnioja2024quasimontecarlobayesiandesign} }
    \label{fig:darcy-2d}
\end{figure}

As a deviation from the first numerical experiment, we adopt the idea introduced in Remark~\ref{remark:cubature}: instead of employing the standard MC-based estimator $\JMN^K$ from \eqref{eq:def-estimator}, we estimate the differential entropy of the GMM approximation for the $\mathcal{G}_K$-induced evidence produced by the first part of Algorithm~\ref{alm_gmm} by resorting to the randomized M\"obius-transformed lattice rule (cf.~\cite{SHK2025,KSG2025}) denoted here by $Q_N^{\widetilde{\Delta}}$. That is, we introduce an alternative estimator
\begin{equation}
\label{eq:Mobius2}
   \widetilde{J}^K_{M,N} = Q_N^{\widetilde{\Delta}} \big(\pi^K_M  \log(\pi^K_M) \big)
   :=\sum_{n=1}^N w_n \pi^K_M (Y_n)  \log(\pi^K_M(Y_n)),
    \end{equation}
where the cubature rule using $\{Y_n,w_n\}_{n=1}^N$ replaces the second loop in Algorithm~\ref{alm_gmm} and $\widetilde{\Delta}$ refers to the random shift in the underlying lattice rule (cf.~\eqref{eq:rand-lattice-def}). The reason for this modification is that the presumed higher convergence rate of the randomized M\"obius-transformed lattice rule enables seeing the predicted convergence rate in $M$ with fewer evaluations of the GMM approximation $\pi^K_M$ for the target evidence density $\pi^K$. Indeed, this is achieved by choosing $N = 1024M$ in all evaluations of the estimator $\widetilde{J}^K_{M,N}$ in the numerical tests. As an additional alteration compared to the first experiment, we test the idea in Remark~\ref{rem:ho-qmc} and also consider a higher-order QMC method, i.e., the tent-transformed shifted lattice rule \cite{GSY2019} in the first part of Algorithm~\ref{alm_gmm}.

For the choice of the generating vector $z$ of the randomly shifted rank-1 lattice points in \eqref{eq:rand-lattice-def}, we use off-the-shelf lattice sequences generated by the CBC construction \cite{CKN2006,NC2006}: (i) for constructing the GMM, we use \texttt{exod2\_base2\_m13.txt} from \cite{DirkWeb}; and (ii) for computing the differential entropy of $\pi^K_M$ using M\"obius-transformed lattice points, we again employ \texttt{lattice-32001-1024-1048576.3600} from~\cite{FrancesWeb}. The reason for these choices is to avoid using two identical lattices for two different approximation steps.

As there is no analytic representation for the target differential entropy, we analyze the convergence of the estimator $\widetilde{J}^K_{M,N}$ in comparison to a reference value $\widetilde{J}^K_{\rm ref} = \widetilde{J}^K_{M_0,N_0}$ that is computed with the randomized tent-transformed QMC lattice rule with $M_0=2^{13}$ and $N_0=2^{20}$. Figure~\ref{fig:pde_10s1_higher_order_qmc} shows the convergence of the RMSE when using MC and the two randomized QMC rules, with 30 realizations, for building the QMC surrogate in the first loop of Algorithm~\ref{alm_gmm}. More precisely, for MC the quantity plotted as a function of $M$ is the right-hand side of the approximate equality
\[
 \sqrt{\E^{\otimes \mu} \E^{\widetilde{\Delta}} \big( \widetilde{J}^K_{\rm ref} - \widetilde{J}^K_{M,N} \big)^2}
 \approx
 \sqrt{\sum_{p=1}^{30}\frac{1}{30}\big(\widetilde{J}^K_{\rm ref} - \widetilde{J}^{K,p}_{M,N} \big)^2},
\]
where $\{\widetilde{J}^{K,p}_{M,N} \}_{p=1}^{30}$ are independent realizations of the estimator $\widetilde{J}^K_{M,N}$, with a ``realization'' also including drawing a random shift for the Möbius-transformed lattice rule in \eqref{eq:Mobius2}. For the QMC variants, the first expectation on the left-hand side is taken over the random shift in the employed randomized QMC rule for building the GMM, and the 30 independent random realizations of the estimator on the right-hand side are drawn accordingly.

\begin{figure}[!h]
\begin{center}
\includegraphics[scale=0.8]{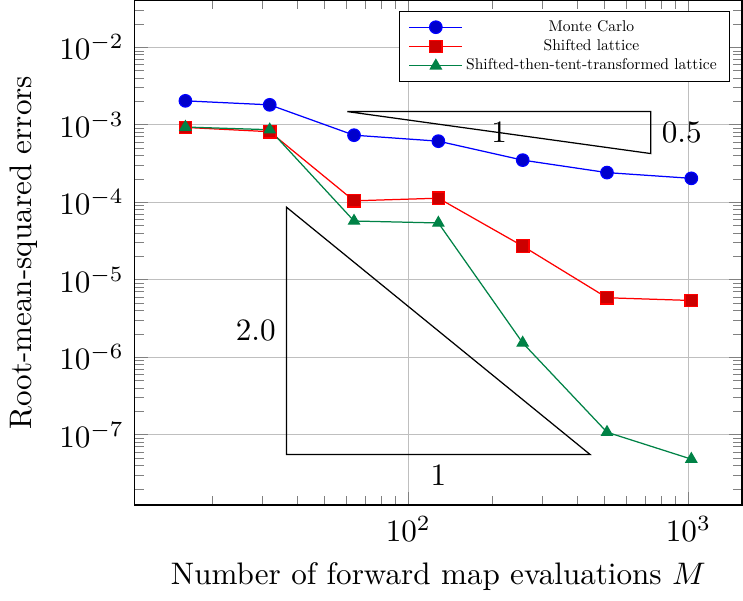}
	\caption{The RMSEs as functions of $M$ for the MC and the two randomized QMC estimators in comparison to the reference differential entropy $\widetilde{J}^K_{\rm ref}$ for the evidence of the nonlinear model \eqref{eq:G_PDE}. The employed QMC methods in the first part of Algorithm~\ref{alm_gmm} are the randomized rank-1 lattice rule (first order method) and the randomized tent-transformed lattice rule (second order method). For all methods, $N=1024M$, which suffice for the $M$-dependent terms to dominate in the estimation error (cf. \eqref{eq:theorem2} and~\eqref{eq:QMC_rate}).
    } \label{fig:pde_10s1_higher_order_qmc}
\end{center}
\end{figure}

When interpreting the convergence rates in Figure~\ref{fig:pde_10s1_higher_order_qmc}, one should note that in~\eqref{eq:theorem2} and~\eqref{eq:QMC_rate}, the convergence rate in $N$ is, in essence, dictated by the method for estimating the differential entropy for the GMM surrogate in the second part of Algorithm~\ref{alm_gmm} -- the motivation for employing the M\"obius-transformed lattice rule with large enough $N$ for this step is to make the $N$-dependent term negligible compared to the $M$-dependent term. On the other hand, the convergence rate with respect to $M$ in \eqref{eq:theorem2} and~\eqref{eq:QMC_rate} is determined by the method used for building the GMM in the first part of Algorithm~\ref{alm_gmm}. This means that one would hope to observe the rate $\mathcal{O}(M^{-1/2})$ for the MC-based GMM, approximately the rate $\mathcal{O}(M^{-1})$ for the first order QMC-based GMM (randomized rank-1 lattice), and  approximately the rate $\mathcal{O}(M^{-2})$ for the second order QMC-based GMM (the randomized tent-transformed lattice rule). Although these conclusions are only heuristic extrapolations of Theorems~\ref{thm:MC-GMM-estimator} and \ref{thm:QMC-uniform-result} since our theoretical results do not cover the M\"obius-transformed lattice rule for computing the differential entropy of a GMM surrogate or the second order QMC for forming the GMM, the convergence rates in Figure~\ref{fig:pde_10s1_higher_order_qmc} anyway seem to be approximately of the anticipated orders.

\section{Conclusion}\label{sec:conclusion}
We introduced an efficient method for approximating the differential entropy of the evidence distribution for a class of inverse problems. The algorithm can be employed in evaluating the expected information gain, the maximization of which is commonly considered in Bayesian OED. Our focus was on reducing the total number of forward map evaluations which was assumed to dominate the computational cost in the considered problem settings. By constructing a surrogate for the evidence $\pi( \, \cdot \, ;\xi)$ via GMM, given a design $\xi$, our method avoids directly computing the nested integral often encountered in Bayesian OED and separates the original problem into two different approximation steps for the unknown and the data. The convergence rate of the MC variant of the proposed method is faster than for standard methods (if measured by the number of forward map evaluations), and this rate can be further accelerated by resorting to QMC techniques. The numerical experiments supported our theoretical findings. 

\section*{Acknowledgement}
We thank Antti Hannukainen for letting us use his FE codes in our numerical experiments. This work was supported by the Research Council of Finland (decisions 348503, 348504, 359181, 359183). A part of the numerical experiments was performed using computer resources provided by the Aalto Science-IT project and the cluster service in LUT University.

\bibliographystyle{siam}
\bibliography{ref.bib}

\end{document}